\documentclass[sigconf]{acmart}
\makeatletter
\global\setbool{@ACM@review}{false}
\makeatother

\usepackage{bm}
\usepackage{amsfonts}
\usepackage{graphicx}
\usepackage{textcomp}
\usepackage{xcolor}
\usepackage{url}
\usepackage{placeins}
\usepackage{array}
\usepackage{pifont}
\usepackage{makecell}
\usepackage{booktabs}
\usepackage{amsthm}
\usepackage{multirow}
\usepackage{subcaption}
\usepackage{booktabs}
\usepackage{amsmath}
\usepackage{pgf}
\usepackage{pgffor}
\usepackage{hyperref}
\usepackage{setspace}
\usepackage[skip=0pt]{caption} 
\usepackage{etoolbox}
\usepackage[normalem]{ulem}
\usepackage{geometry}
\usepackage[most]{tcolorbox}
\tcbuselibrary{breakable}
\usepackage[table,xcdraw]{xcolor}
\usepackage{tabularx}
\usepackage[utf8]{inputenc}
\usepackage[linesnumbered,ruled,vlined]{algorithm2e}
\usepackage{soul}
\usepackage{float}
\usepackage{enumitem}
\usepackage{lipsum}   

\setlist{nosep, leftmargin=*}
\newtheorem{definition}{\textbf{Definition}}

\newtheorem{example}{\textbf{Example}}

\newtheorem{theorem}{\textbf{Theorem}}

\newtheorem{remark}{\textbf{Remark}}

\newcommand{\ModelName}{MSR-MEL}

\makeatletter
\patchcmd{\proof}
  {\topsep}
  {3pt \@plus 0.1ex \@minus 0.1ex}
  {}{}
\patchcmd{\endproof}
  {\topsep}
  {3pt \@plus 0.1ex \@minus 0.1ex}
  {}{}
\makeatother

\setcopyright{none}
\copyrightyear{2027}
\acmYear{2027}
\acmDOI{}

\acmConference[SIGMOD '27]
{ACM SIGMOD/PODS International Conference on Management of Data}
{June 13--19, 2027}
{Huntington Beach, CA, USA}
\acmISBN{978-1-4503-XXXX-X/2018/06}

\begin{document}

    \title{Multi-Perspective Evidence Synthesis and Reasoning for Unsupervised Multimodal Entity Linking}
    
    \author{Mo Zhou}
    \affiliation{
    \institution{The University of New South Wales} \city{Sydney} \country{Australia} 
    }
    \email{mo.zhou4@unsw.edu.au}

    \author{Jianwei Wang}
    \authornote{Jianwei Wang is the corresponding author.}
    \affiliation{
    \institution{The University of New South Wales} \city{Sydney} \country{Australia} 
    }
    \email{jianwei.wang1@unsw.edu.au}

    \author{Kai Wang}
    \affiliation{
    \institution{Shanghai Jiao Tong University} \city{Shanghai} \country{China} 
    }
    \email{w.kai@sjtu.edu.cn}

    \author{Helen Paik}
    \affiliation{
    \institution{The University of New South Wales} \city{Sydney} \country{Australia} 
    }
    \email{h.paik@unsw.edu.au}

    \author{Ying Zhang}
    \affiliation{
    \institution{University of Technology Sydney} \city{Sydney} \country{Australia} 
    }
    \email{Ying.Zhang@uts.edu.au}

    \author{Wenjie Zhang}
    \affiliation{
    \institution{The University of New South Wales} \city{Sydney} \country{Australia} 
    }
    \email{wenjie.zhang@unsw.edu.au}

    \renewcommand{\shortauthors}{Mo Zhou, Jianwei Wang, Kai Wang, Helen Paik, Ying Zhang, and Wenjie Zhang}

    \begin{abstract}
    Multimodal Entity Linking (MEL) is a fundamental task in data management that maps ambiguous mentions with diverse modalities to the multimodal entities in a knowledge base.
    However, most existing MEL approaches primarily focus on optimizing instance-centric features and evidence, leaving broader forms of evidence and their intricate interdependencies insufficiently explored.
    Motivated by the observation that 
    human expert decision-making process relies on multi-perspective judgment,
    in this work, we propose \textbf{\ModelName}, a \textbf{\ul{M}}ulti-perspective Evidence \textbf{\ul{S}}ynthesis and \ul{\textbf{R}}easoning framework with Large Language Models (LLMs) for unsupervised \ul{\textbf{MEL}}.
    Specifically, we adopt a two-stage framework:
    \textit{(1) Offline Multi-Perspective Evidence Synthesis} constructs a comprehensive set of evidence. This includes instance-centric evidence capturing the 
    instance-centric multimodal information of mentions and entities, group-level evidence that aggregates neighborhood information, lexical evidence based on string overlap ratio, and statistical evidence based on simple summary statistics.
    A core contribution of our framework is the synthesis of group-level evidence, which effectively aggregates vital neighborhood information by graph.
    We first construct LLM-enhanced contextualized graphs. 
    Subsequently, different modalities are jointly aligned through an asymmetric teacher-student graph neural network.
    \textit{(2) Online Multi-Perspective Evidence Reasoning} leverages the power of LLM as a reasoning module to analyze the correlation and semantics of the multi-perspective evidence to induce an effective ranking strategy for accurate entity linking without supervision.
    Extensive experiments on widely used MEL benchmarks demonstrate that \ModelName\,consistently outperforms state-of-the-art unsupervised methods.
    The source code of this paper was available at: \url{https://anonymous.4open.science/r/MSR-MEL-C21E/}.
    \end{abstract}
\begin{CCSXML}
<ccs2012>
 <concept>
  <concept_id>00000000.0000000.0000000</concept_id>
  <concept_desc>Do Not Use This Code, Generate the Correct Terms for Your Paper</concept_desc>
  <concept_significance>500</concept_significance>
 </concept>
 <concept>
  <concept_id>00000000.00000000.00000000</concept_id>
  <concept_desc>Do Not Use This Code, Generate the Correct Terms for Your Paper</concept_desc>
  <concept_significance>300</concept_significance>
 </concept>
 <concept>
  <concept_id>00000000.00000000.00000000</concept_id>
  <concept_desc>Do Not Use This Code, Generate the Correct Terms for Your Paper</concept_desc>
  <concept_significance>100</concept_significance>
 </concept>
 <concept>
  <concept_id>00000000.00000000.00000000</concept_id>
  <concept_desc>Do Not Use This Code, Generate the Correct Terms for Your Paper</concept_desc>
  <concept_significance>100</concept_significance>
 </concept>
</ccs2012>
\end{CCSXML}

\ccsdesc[500]{Do Not Use This Code~Generate the Correct Terms for Your Paper}
\ccsdesc[300]{Do Not Use This Code~Generate the Correct Terms for Your Paper}
\ccsdesc{Do Not Use This Code~Generate the Correct Terms for Your Paper}
\ccsdesc[100]{Do Not Use This Code~Generate the Correct Terms for Your Paper}

\keywords{Multimodal Entity Linking, Multi-Perspective Reasoning, Graph Neural Networks, Large Language Models}

\maketitle


    \section{Introduction}
    \label{sec.intro}
Entity Linking (EL) is a core task in intelligent information processing that maps ambiguous mentions to specific entries in a Knowledge Base (KB)\cite{cucerzan2007large,delpeuch2019opentapioca,dubey2018earl,fang2019joint,chen2024knowledge}. With the rapid growth of multimedia data on the web, this task has extended to Multimodal Entity Linking (MEL). Unlike traditional EL which relies solely on text, MEL commonly utilizes textual context and visual information to identify entities~\cite{moon2018multimodal}. As a pivotal upstream task, MEL plays a vital role in various real-world applications, such as multimodal information retrieval, question answering, and social media recommendation~\cite{de2019question, longpre2021entity, xiong2019improving, geng2022recommendation}.

\begin{example}
    
Figure~\ref{fig:example} shows a representative multimodal entity linking example.
The mention combines a short text (“Oxford published a new study on vaccine efficacy”) with an associated image of a university campus. Given multiple candidate entities sharing the same word `Oxford', several candidates exhibit partial textual or visual relevance. However, only the University of Oxford correctly matches the real-world reference, highlighting the ambiguity that arises from relying on isolated modality.

\end{example}

\begin{figure}[t!]
    \centering
    \includegraphics[width=1\linewidth]{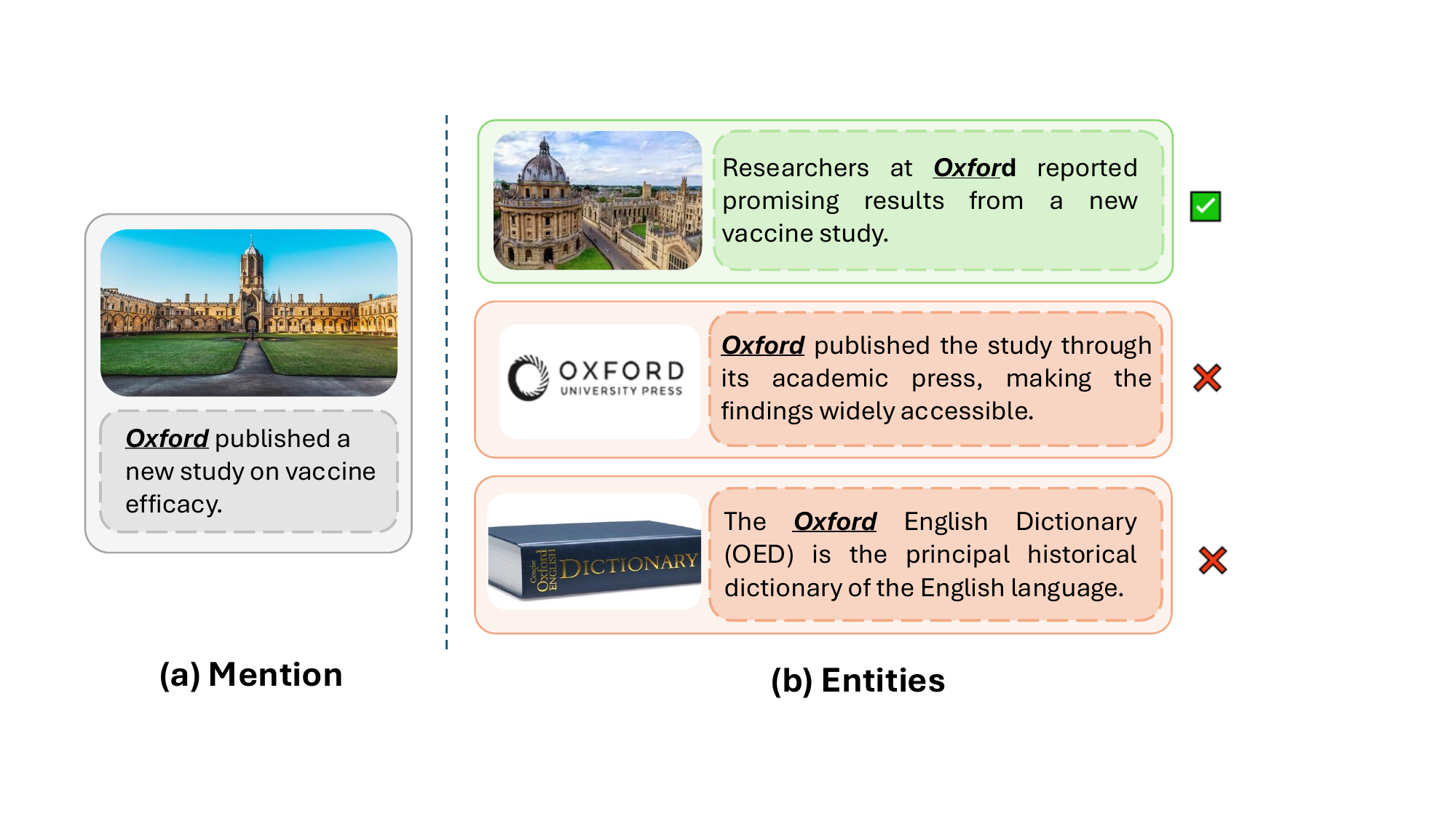}
    \caption{An example of multimodal entity linking.}
    \label{fig:example}
\end{figure}
Given its significance, MEL has attracted increasing research attention. Existing approaches are typically categorized into supervised and unsupervised paradigms, with representative methods compared in Table~\ref{tab:comparison}.
Supervised methods model the MEL task as a classification problem on annotated mention–entity pairs, learning a score that indicates the confidence in assigning multimodal mentions to entities~\cite{eshel2017named, wang2022multimodal, song2024dual, hu2025multi, liu2024unimel, moon2018multimodal, luo2023multi}. 
For example, DWE enhances multimodal matching through query and entity refinement~\cite{song2024dual}, while M$^3$EL models finer-grained intra- and cross-modal interactions~\cite{hu2025multi}. UniMEL and KGMEL further incorporate Large Language Models (LLMs) for contextual enhancement or candidate selection~\cite{liu2024unimel,kim2025kgmel}.

Supervised methods rely heavily on large amounts of labeled data in MEL, and collecting such multimodal mention–entity annotations is expensive.
To alleviate the dependence on manual annotation, recent unsupervised methods have emerged. These approaches generally leverage multimodal models to generate representations and perform 
MEL via ranking on the similarity score of the representations~\cite{luo2023multi, shi2024generative, zhu2025openmel}.
Among these unsupervised approaches, BERT-based methods serve as simple baselines~\cite{devlin2019bert}, while OpenMEL and DeepMEL further introduce LLMs-based enhancement or agent-style interaction~\cite{zhu2025openmel,wang2026deepmel}.

\vspace{2pt}
\begin{table}[t]
\centering
\caption{
Comparison of MEL methods.
Evidence type:
E$_\text{inst}$ = instance-centric evidence,
E$_\text{group}$ = group-level evidence,
E$_\text{lex}$ = lexical evidence,
E$_\text{stat}$ = statistical evidence.
}
\label{tab:comparison}
\resizebox{\linewidth}{!}{%
\begin{tabular}{l c c c c c c}
\toprule
\textbf{Approach} & \textbf{Label-free?} &
\multicolumn{4}{c}{\textbf{Evidence Type}} & \textbf{LLM Role} \\
\cmidrule(lr){3-6}
 & & E$_\text{inst}$ & E$_\text{group}$ & E$_\text{lex}$ & E$_\text{stat}$ & \\
\midrule
\quad DWE~\cite{song2024dual}       
& \ding{56} 
& \ding{52} & \ding{56}  & \ding{56}  & \ding{56} 
& \ding{56} \\
\quad M$^3$EL~\cite{hu2025multi}    
& \ding{56} 
& \ding{52} & \ding{56}  & \ding{56}  & \ding{56} 
& \ding{56} \\
\quad UniMEL~\cite{liu2024unimel}                        
& \ding{56} 
& \ding{52} & \ding{56}  & \ding{56}  & \ding{56} 
& Context Enhancer \\
\quad KGMEL~\cite{kim2025kgmel}                         
& \ding{56} 
& \ding{52} & \ding{56}  & \ding{56}  & \ding{56} 
& Context Enhancer + Selector  \\
\quad I2CR~\cite{liu2025i2cr}                          
& \ding{56} 
& \ding{52} & \ding{56}  & \ding{52}  & \ding{56} 
& Controller        \\
\midrule
\quad OpenMEL~\cite{zhu2025openmel} 
& \ding{52} 
& \ding{52} & \ding{56}  & \ding{52}  & \ding{56} 
& Context Enhancer \\
\quad BERT~\cite{devlin2019bert}   
& \ding{52} 
& \ding{52} & \ding{56}  & \ding{56}  & \ding{56} 
& \ding{56}       \\
\quad DeepMEL~\cite{wang2026deepmel}                       
& \ding{52} 
& \ding{52} & \ding{56}  & \ding{56}  & \ding{56} 
& Agents            \\
\midrule
\quad Ours (\ModelName)             
& \ding{52} 
& \ding{52} & \ding{52}  & \ding{52}  & \ding{52} 
& Context Enhancer + Decision inducer  \\
\bottomrule
\end{tabular}%
}
\end{table}
\setlength{\textfloatsep}{2pt}
\noindent \textbf{Motivations.}
Despite this progress,
current supervised and unsupervised MEL methods primarily focus on optimizing instance-centric multimodal features through various representation learning techniques~\cite{wang2024efficient,xing2023drin}.
Although some methods employ contrastive learning with context information, they do not directly fuse the neighborhood feature~\cite{luo2024bridging}.
Therefore, they fail to exploit broader forms of evidence and capture the intricate relationships and nuanced interactions within multimodal evidence, making it insufficient for complex disambiguation tasks in MEL settings, where multimodal data is often partially missing and inconsistent~\cite{wu2024deep,wu2024deep,zhang2024multimodal}. 
As detailed in our experimental analysis (Table~\ref{tab: statistics_of_datasets}), even widely-used MEL benchmarks still suffer from significant data incompleteness, and the entity image ratios for WikiMEL, RichpediaMEL, and WikiDiverse are only 67.26\%, 57.41\%, and 50.16\%, respectively.

These observations suggest that relying solely on instance-centric evidence is insufficient for robust MEL, especially when multimodal information is incomplete or inconsistent.
In contrast, human expert decision-making rarely relies on a single cue or evidence. Instead, humans naturally aggregate evidence from multiple perspectives~\cite{de2010accumulation}, such as semantic relevance, contextual coherence, and relational consistency, and reason about their consistency to reach a decision. This observation suggests that effective MEL may benefit from multi-perspective evidence with explicit reasoning over their interactions.

\vspace{3pt}
\noindent \textbf{Challenges.}
Multi-perspective MEL is inherently non-trivial and faces two fundamental challenges:

\noindent \textit{Challenge I: How to extract and synthesize stable, complementary multi-perspective evidence from sparse and low-quality multimodal data?}
In real-world scenarios, multimodal data are harvested from heterogeneous sources, which inherently leads to missing modalities and inconsistent data quality~\cite{xu2021unsupervised}. 
Moreover, different modalities often provide complementary cues at different levels of granularity, but these cues are not always well aligned semantically~\cite{li2024multimodal}. 
Therefore, synthesizing robust and comprehensive evidence from such sparse, noisy, and weakly aligned data remains a fundamental challenge.

\noindent \textit{Challenge II: How to effectively reason over multi-perspective evidence in a fully unsupervised setting?}
Different evidence perspectives carry distinct semantic meanings and may provide complementary or even conflicting signals.
Furthermore, the lack of labels makes it difficult to decide how to weigh and utilize different types of evidence effectively.
Effectively reasoning over such multi-perspective evidence to support accurate MEL without supervision thus poses a significant challenge.

\vspace{3pt}
\noindent \textbf{Ours approaches.}
To address the aforementioned challenges, we propose
\textbf{\ModelName}, a \textbf{\ul{M}}ulti-perspective Evidence \textbf{\ul{S}}ynthesis and \ul{\textbf{R}}easoning framework with LLMs for unsupervised \ul{\textbf{MEL}}.
Specifically, MSR-MEL follows a two-stage design:

\noindent \ul{\textit{(1) Offline Stage: Multi-Perspective Evidence Synthesis.}}
To address \textit{Challenge I}, this stage performs offline evidence synthesis to construct a comprehensive set of complementary evidence from multiple perspectives.
This includes:
(i) instance-centric evidence derived from the multimodal representations of the instance; 
(ii) 
group-level evidence that captures vital neighborhood information. Since instance-level features are often sparse or low-quality in real-world scenarios, aggregating these broader relational contexts is essential for robust disambiguation and MEL;
(iii) lexical evidence based on string overlap; 
(iv) statistical evidence based on a simple summary of the previous evidence.

A core component of this stage is the synthesis of group-level evidence, which effectively aggregates rich neighborhood information. 
To achieve this, we first construct LLM-enhanced contextualized graphs to encode high-fidelity semantic relationships between mentions and entities. Then, different modalities are jointly aligned through an asymmetric teacher-student graph neural network, enabling coherent and noise-resistant evidence representation across perspectives.

\noindent \ul{\textit{(2) Online Stage: Multi-Perspective Evidence Reasoning.}}
To address \textit{Challenge II}, we leverage LLMs to integrate and reason over multi-perspective evidence for candidate re-ranking.
Specifically, we first perform candidate selection to obtain a high-recall candidate set.
Then, the LLM explicitly analyzes the synthesized evidence and re-ranks candidate entities by inducing a tree-like reasoning strategy.
The resulting decision tree defines an interpretable reasoning path for each mention–entity pair.
Candidate entities are scored by aggregating evidence contributions along the reasoning path, yielding a final ranking score that reflects evidence strength and cross-evidence consistency.

\vspace{2pt}
\noindent \textbf{Contributions.}
The main contributions can be summarized:
\begin{itemize}[leftmargin=*, itemsep=2pt]

    \item We propose \textbf{\ModelName}, a novel multi-perspective evidence synthesis and reasoning framework with LLMs for unsupervised MEL, which formulates the task as a two-stage process: an offline stage for multi-perspective evidence synthesis and an online stage for multi-perspective evidence reasoning.

    \item We introduce a comprehensive multi-perspective evidence set that integrates instance-centric, group-level, lexical, and statistical evidence, enabling robust evidence synthesis from sparse and low-quality multimodal data.

    \item We develop an LLM-driven evidence reasoning mechanism that induces tree-like ranking strategies at inference time, allowing effective entity selection without supervision.

    \item Extensive experiments on widely used MEL benchmarks demonstrate that \ModelName{} consistently outperforms prior unsupervised methods. In particular, compared with OpenMEL~\cite{zhu2025openmel}, the previous state-of-the-art unsupervised approach, \ModelName{} improves Hit@1 by an average of 13.04\% across benchmarks, while also exhibiting stronger robustness and efficiency in challenging multimodal scenarios.
\end{itemize}

    \section{Preliminary}
    \label{sec.preliminary(Ours)}
    In this section, we formally define the MEL problem.

\noindent \textbf{Notations.}
The multimodal corpus comprises a set of mentions $\mathcal{M} = \{m_i\}_{i=1}^{|\mathcal{M}|}$.
Each mention $m_i$ is defined by a textual description (including the mention and its context) and an associated visual image. Formally, each mention $m_i$ is defined as a triplet $m_i = (n_i, t_i, v_i)$, where $n_i$ denotes the mention name, $t_i$ represents the surrounding textual context, and $v_i$ is the associated visual image. It is important to note that $v_i$ may be null ($\emptyset$) or noisy due to data incompleteness.
Correspondingly, the target KB comprises a collection of entities $\mathcal{E} = \{e_j\}_{j=1}^{|\mathcal{E}|}$. Each entity $e_j$ is characterized by a similar triplet structure $e_j = (n_j, t_j, v_j)$, where $n_j$ is the canonical name, $t_j$ is the textual description derived from the KB, and $v_j$ is the corresponding image. In practice, entity modalities may also be incomplete or noisy, which further increases the ambiguity of the linking process. Moreover, multiple candidate entities may share similar names or partially overlapping textual and visual signals, making reliable disambiguation non-trivial. Based on these notations, the task is defined as follows:

\begin{definition}[\textbf{Multi-modal Entity Linking}]
Given a query mention $m_i \in \mathcal{M}$ and a KB $\mathcal{E}$, the goal is to retrieve the ground-truth entity $e^*$.
Formally, for each mention $m_i$, we aim to identify the predicted entity $\hat{e}_i$ from a candidate set $\mathcal{C}_i \subseteq \mathcal{E}$ by maximizing a multi-modal matching score $\Phi(m_i, e_j)$:
\begin{equation}
    \hat{e}_i = \arg\max_{e_j \in \mathcal{C}_i} \Phi(m_i, e_j).
\end{equation}
\end{definition}

    \section{Framework Overview}
    \label{sec.overview}
    \begin{figure*}
    \centering
    \includegraphics[width=1\linewidth]{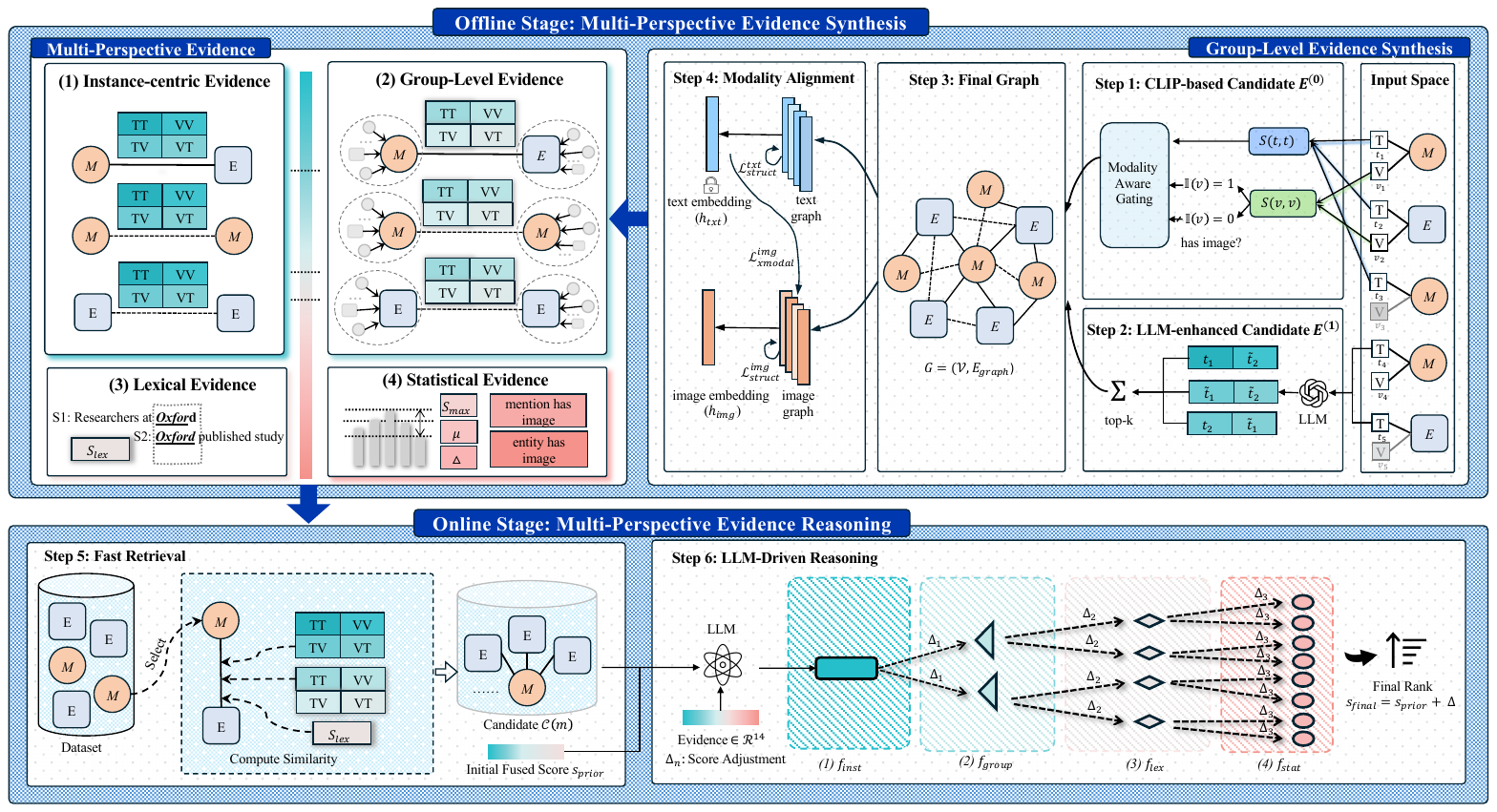}
    \caption{Overview of the proposed \ModelName{} framework. \ModelName{} adopts a two-stage design. Offline multi-perspective evidence synthesis (Top): given a multimodal corpus, we construct four evidence types for each mention–entity pair: (1) instance-centric evidence; (2) group-level evidence; (3) lexical evidence; and (4) statistical evidence. Group-level evidence synthesis (Top-Right): candidate edges are generated through CLIP-based gating and LLM-enhanced semantic expansion, followed by graph construction and modality alignment to obtain neighborhood-aware representations. Online multi-perspective evidence reasoning (Bottom): A coarse multi-view retrieval stage first assembles a high-recall candidate set and outputs an initial fused score. The synthesized evidence is then fed into an LLM-driven reasoning module, which integrates heterogeneous evidence and refines an initial fused score to produce the final entity ranking.}
    \label{fig:framework}
\end{figure*}

\begin{table}[t]
\centering
\caption{\textbf{Symbols and descriptions.}}
\label{tab:symbols}
\renewcommand{\arraystretch}{1.15}
\resizebox{\columnwidth}{!}{%
\begin{tabular}{|l|l|l|}
\hline
\rowcolor[HTML]{C0C0C0}
\textbf{Category} & \textbf{Symbol} & \textbf{Description} \\ \hline
\multirow{4}{*}{Evidence Type}
& $\mathbf{f}_{\text{inst}}(m,e)$ & Instance-centric evidence \\
\cline{2-3}
& $\mathbf{f}_{\text{group}}(m,e)$ & Group-level evidence \\
\cline{2-3}
& $\mathbf{f}_{\text{lex}}(m,e)$ & Lexical evidence \\
\cline{2-3}
& $\mathbf{f}_{\text{stat}}(m,e)$ & Statistical evidence \\
\hline
\multirow{6}{*}{Sets / Graph}
& $\mathcal{M}, \mathcal{E}$ & Mention set and entity set \\ \cline{2-3}
& $\mathcal{V}=\mathcal{M}\cup\mathcal{E}$ & Node set (mentions + entities) \\ \cline{2-3}
& $E^{(0)}$ & CLIP-gated candidate edge set \\ \cline{2-3}
& $E^{(1)}$ & LLM-enhanced candidate edge set \\ \cline{2-3}
& $E_{\text{graph}}$ & Final edge set ($E^{(0)} \cup E^{(1)}$) \\ \cline{2-3}
& $\mathcal{G}=(\mathcal{V},E_{\text{graph}})$ & Contextualized multimodal graph \\ \hline

\multirow{4}{*}{Subgraph}
& $\mathcal{G}_{v_i}$ & PPR subgraph centered at node $v_i$ \\ \cline{2-3}
& $K$ & Number of neighbors in PPR subgraph extraction \\ \cline{2-3}
& $\mathcal{G}^{\text{text}}_{v_i}$ & Text-view subgraph with features $\mathbf{t}_v$ \\ \cline{2-3}
& $\mathcal{G}^{\text{img}}_{v_i}$ & Image-view subgraph with features $\mathbf{v}_v$ \\ \hline


\multirow{4}{*}{GNN / Reps}
& $f_{\theta}(\cdot)$ & GNN encoder (shared backbone) \\ \cline{2-3}
& $\mathbf{h}_{v_i}$ & Contextualized node representation \\ \cline{2-3}
& $\mathbf{s}_{v_i}$ & Subgraph summary (mean pooling) \\ \cline{2-3}
& $\mathbf{h}^{\text{txt}}_{v_i}, \mathbf{h}^{\text{img}}_{v_i}$ & Text / image GNN representations \\ \hline

\multirow{5}{*}{Evidence Reasoning}
& $\mathcal{C}(m)$ & Candidate entity set for mention $m$ \\ \cline{2-3}
& $\Delta_n$ & Adjustment score of path $n$ \\  \cline{2-3}
& $s_{\text{prior}}(m,e)$ & Initial score generated by the LLM\\ \cline{2-3}
& $s_{\text{final}}(m,e)$ & Final score after re-ranking \\ \cline{2-3}
& $\pi_{\text{LLM}}(\cdot)$ & LLM-generated ranking function \\ \hline

\multirow{4}{*}{Loss / Params}
& $\mathcal{L}_{\text{struct}}$ & Structural contrastive loss \\ \cline{2-3}
& $\mathcal{L}_{\text{xmodal}}$ & Cross-modal InfoNCE loss \\ \cline{2-3}
& $\lambda_1,\lambda_2$ & Loss balancing coefficients \\ \cline{2-3}
& $\tau, \eta$ & Temperature and margin hyperparameters \\ \hline

\end{tabular}
}
\end{table}

The overall architecture of \ModelName{} is illustrated in Figure~\ref{fig:framework}. \ModelName{} adopts a two-stage design consisting of an offline multi-perspective evidence synthesis stage and an online multi-perspective evidence reasoning stage for unsupervised MEL.
For clarity, the key symbols and notations used throughout this framework are summarized in Table~\ref{tab:symbols}.

In the Offline Stage, for each mention-entity pair, we construct an evidence set composed of instance-centric, group-level, lexical, and statistical evidence. 
A core component of this stage is the synthesis of group-level evidence, which aggregates vital neighborhood context beyond the individual mention-entity pair. Specifically, we construct LLM-enhanced contextualized graphs to encode high-fidelity semantic relationships and aggregate broader neighborhood information that compensates for isolated local information.
Subsequently, an asymmetric teacher-student alignment strategy is employed to learn robust cross-modal representations. By guiding the image student with the stable text teacher, this strategy ensures consistent alignment and protects the model from being skewed by noisy signals.

In the Online Stage, \ModelName{} executes a structured inference process to balance efficiency and accuracy.
It first performs candidate selection to ensure high recall.
Then, the synthesized evidence vector is processed by an LLM-driven re-ranking module. Instead of acting as a traditional black-box scorer, the LLM induces an interpretable reasoning path to evaluate the consistency of multi-perspective evidence and generates dynamic score adjustments to produce the final refined entity ranking.

To theoretically support this design, we first present the following theorem on the benefit of using multiple evidence.
\begin{theorem}
\label{theorem:evidence}
Consider an idealized multi-perspective estimator $\theta^{mp}$ that integrates $K$ complementary evidence channels, and a single-perspective estimator $\theta^{sp}_k$ that relies only on the $k$-th channel. Let $\mathcal{R}(\cdot)$ denote the estimation risk, defined as the expected squared error in predicting the latent compatibility score between a mention and an entity. Assume that each evidence channel provides an unbiased noisy estimate of the same latent compatibility, and that the channel noises admit a positive-definite covariance matrix. Then, the minimum-risk multi-perspective estimator satisfies
$$
\mathcal{R}(\theta^{mp}) \le \mathcal{R}(\theta^{sp}_k), \quad \forall k \in \{1,\dots,K\}.
$$
Moreover, if the optimal fusion uses a non-degenerate combination of at least two complementary channels, the inequality is strict.
\end{theorem}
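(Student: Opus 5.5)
We model the setting as a classical linear unbiased fusion (BLUE / generalized least squares) problem. Let $s$ denote the latent compatibility score of a mention–entity pair and let the $K$ evidence channels produce $x_k = s + \varepsilon_k$, $k=1,\dots,K$. The unbiasedness assumption gives $\mathbb{E}[\varepsilon_k]=0$. Write $\mathbf{x}=s\mathbf{1}+\boldsymbol{\varepsilon}$ with noise covariance $\Sigma \succ 0$.

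We restrict attention to linear estimators $\theta_{\mathbf{w}}=\mathbf{w}^\top\mathbf{x}$ that are unbiased for every $s$. This forces $\mathbf{w}^\top\mathbf{1}=1$, and the risk becomes $\mathcal{R}(\theta_{\mathbf{w}})=\mathbb{E}[(\mathbf{w}^\top\mathbf{x}-s)^2]=\mathbf{w}^\top\Sigma\mathbf{w}$. The single-perspective estimator $\theta^{sp}_k=x_k$ is exactly the member of this family with $\mathbf{w}=\mathbf{e}_k$, the $k$-th standard basis vector, so its risk is $\Sigma_{kk}$.

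The multi-perspective estimator $\theta^{mp}$ is defined as the minimizer of $\mathbf{w}^\top\Sigma\mathbf{w}$ subject to $\mathbf{w}^\top\mathbf{1}=1$. Since $\Sigma\succ0$, the objective is strictly convex on the affine constraint set, so a unique minimizer exists. A Lagrange multiplier computation gives
\[
\mathbf{w}^\star=\frac{\Sigma^{-1}\mathbf{1}}{\mathbf{1}^\top\Sigma^{-1}\mathbf{1}},\qquad \mathcal{R}(\theta^{mp})=\frac{1}{\mathbf{1}^\top\Sigma^{-1}\mathbf{1}} .
\]

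The weak inequality then follows immediately. Each $\mathbf{e}_k$ is feasible, so $\mathcal{R}(\theta^{mp})\le \mathbf{e}_k^\top\Sigma\mathbf{e}_k=\mathcal{R}(\theta^{sp}_k)$ for every $k$. If an explicit check is wanted, Cauchy–Schwarz in the $\Sigma$-inner product gives $1=(\mathbf{e}_k^\top\mathbf{1})^2\le(\mathbf{e}_k^\top\Sigma\mathbf{e}_k)(\mathbf{1}^\top\Sigma^{-1}\mathbf{1})$, which is the same inequality.

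For strictness, we interpret ``non-degenerate combination of at least two complementary channels'' as the statement that $\mathbf{w}^\star$ has at least two nonzero entries, i.e.\ $\mathbf{w}^\star\neq\mathbf{e}_k$ for every $k$. Uniqueness of the minimizer of a strictly convex function then gives, for each $k$, that the feasible point $\mathbf{e}_k$ differs from $\mathbf{w}^\star$ and so has strictly larger objective value. Hence $\mathcal{R}(\theta^{mp})<\mathcal{R}(\theta^{sp}_k)$ for all $k$. Equivalently, equality in the Cauchy–Schwarz step would require $\Sigma^{-1}\mathbf{1}\parallel\mathbf{e}_k$, which means $\mathbf{w}^\star=\mathbf{e}_k$.

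The main obstacle is fixing the idealized model precisely enough for the statement to be true. If $\theta^{mp}$ is allowed to be an arbitrary, possibly biased, or nonlinear estimator, ``minimum-risk'' needs a prior on $s$. To stay faithful to the hypotheses as stated, we will make the unbiased-linear restriction explicit. We will also remark that the same containment argument (single-channel estimators lie inside the multi-perspective class, so the infimum can only decrease) extends to any richer estimator class, while the strictness claim relies on the uniqueness supplied by $\Sigma\succ0$.
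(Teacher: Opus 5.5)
Your proposal is correct and follows essentially the same route as the paper: the Lagrange-multiplier (BLUE) weights $\mathbf{w}^\star=\Sigma^{-1}\mathbf{1}/(\mathbf{1}^\top\Sigma^{-1}\mathbf{1})$ with risk $1/(\mathbf{1}^\top\Sigma^{-1}\mathbf{1})$, compared against the feasible single-channel point $\mathbf{e}_k$. Your strictness argument (a strictly convex objective has a unique minimizer, and $\mathbf{w}^\star\neq\mathbf{e}_k$) is tighter than the paper's appeal to the noises being ``not perfectly correlated,'' which alone does not suffice (e.g.\ $\Sigma=\bigl(\begin{smallmatrix}1&1\\1&4\end{smallmatrix}\bigr)$ gives $\mathbf{w}^\star=\mathbf{e}_1$ despite correlation $1/2$); also, by dropping the paper's $w_k\ge 0$ constraint you avoid its inconsistency, since $\mathbf{w}^\star$ can have negative entries.
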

    
    \section{Multi-Perspective Evidence Synthesis}
    \label{sec.offline}
    \subsection{Multi-Perspective Evidence Formulation}
Relying only on instance-centric similarity is often insufficient in MEL,
this motivates the need for a more comprehensive representation that captures complementary signals from multiple perspectives.
Accordingly, we design a structured set of multi-perspective evidence that jointly characterizes the compatibility between a mention--entity pair.
Each mention--entity pair $(m,e)$ is represented by a multi-perspective evidence vector $\mathbf{f}(m,e) \in \mathbb{R}^{14}$, which serves as the unified input to the LLM-driven reasoning module.
Formally, the evidence vector is composed of four evidence types:
\begin{equation}
\label{eq:feat_all}
\small
\begin{aligned}
\mathbf{f}(m,e)
&=
\big[
\mathbf{f}_{\text{inst}}(m,e),\
\mathbf{f}_{\text{group}}(m,e),\\
&\quad
\mathbf{f}_{\text{lex}}(m,e),\
\mathbf{f}_{\text{stat}}(m,e)
\big]
\in \mathbb{R}^{14}.
\end{aligned}
\end{equation}

\noindent \textbf{(a) Instance-centric Evidence ($\mathbf{f}_{\text{inst}}(m,e)$).}
This evidence type captures instance-centric textual and visual features between the mention and the candidate entity based on CLIP embeddings~\cite{radford2021learning}.
We compute cosine similarity scores across four modality combinations, where the first and second letters denote mention and the entity, respectively:
text--text (TT), text--image (TV), image--text (VT), and image--image (VV).
Accordingly, the instance-centric evidence is defined as:
\begin{equation}
\label{eq:feat_orig}
\mathbf{f}_{\text{inst}}(m,e)
=
\big[
s^{\text{TT}}_{\text{inst}},
s^{\text{TV}}_{\text{inst}},
s^{\text{VT}}_{\text{inst}},
s^{\text{VV}}_{\text{inst}}
\big].
\end{equation}

\noindent \textbf{(b) Group-level Evidence ($\mathbf{f}_{\text{group}}(m,e)$).}
Given that instance-centric features are often insufficient due to sparse and low-quality multimodal data, we introduce group-level evidence to aggregate vital neighborhood information.
This evidence type comprises cosine similarity scores computed from graph-enhanced embeddings learned via a contextualized multimodal graph, details are in Section~\ref{subsec: graph_construction} and Section~\ref{subsec: graph_learning}.
Formally, we define the group-level evidence as:
\begin{equation}
\label{eq:feat_gnn}
\mathbf{f}_{\text{group}}(m,e)
=
\big[
s^{\text{TT}}_{\text{group}},
s^{\text{TV}}_{\text{group}},
s^{\text{VT}}_{\text{group}},
s^{\text{VV}}_{\text{group}}
\big].
\end{equation}

\noindent \textbf{(c) Lexical Evidence ($\mathbf{f}_{\text{lex}}(m,e)$).}
We also incorporate a normalized lexical matching score based on the string overlap between the mention and the entity name. This evidence serves as a symbolic anchor that is particularly effective for handling aliases, name variants, and near-exact matches. Specifically, given a mention string $m$ and a candidate entity name $e$, we compute their lexical similarity using the Gestalt Pattern Matching principle, which recursively identifies matched subsequences between the two strings. Let $M(m,e)$ denote the total length of all matched subsequences, and let $|s|$ denote the length of a string $s$. The lexical similarity score is defined as:
\begin{equation}
s_{\text{lex}}(m,e)=\frac{2\cdot M(m,e)}{|m|+|e|},
\end{equation}
where $s_{\text{lex}}(m,e)\in[0,1]$. This case-insensitive metric is robust to minor character-level variations, such as typos or spelling differences, making it particularly reliable for near-exact lexical matching. The lexical evidence is then represented as:
\begin{equation}
\label{eq:feat_str}
\mathbf{f}_{\text{lex}}(m,e)
=
\big[
s_{\text{lex}}(m,e)
\big].
\end{equation}

\noindent \textbf{(d) Statistical Evidence ($\mathbf{f}_{\text{stat}}(m,e)$).}
To enable reliability-aware reasoning, we summarize the distributional characteristics of neural similarity scores, as such descriptors are crucial for capturing inherent modal uncertainty~\cite{zhou2025probdiffflow}. 
Specifically, we summarize the similarity evidence by three statistical descriptors: the mean similarity $\mu(m,e)$, the maximum similarity $s_{\max}(m,e)$, and the dominance gap $\Delta(m,e)$. 
The mean similarity is defined as the average of the preceding similarity scores:
\begin{equation}
    \label{eq:mu}
    \mu(m,e)
    =
    \frac{
    \sum s,\quad
    s \in
    \left\{
    \mathbf{f}_{\text{inst}}(m,e),
    \mathbf{f}_{\text{group}}(m,e),
    \mathbf{f}_{\text{lex}}(m,e)
    \right\}
    }{
    \left|
    \mathbf{f}_{\text{inst}}(m,e)
    \right|
    +
    \left|
    \mathbf{f}_{\text{group}}(m,e)
    \right|
    +
    \left|
    \mathbf{f}_{\text{lex}}(m,e)
    \right|
    }.
\end{equation}
where $\mu(m,e)$ denotes the mean of all preceding similarity values contained in $\mathbf{f}_{\text{inst}}(m,e)$, $\mathbf{f}_{\text{group}}(m,e)$, and $\mathbf{f}_{\text{lex}}(m,e)$.
The dominance gap is further defined as:
\begin{equation}
\Delta(m,e)=s_{\max}(m,e)-\mu(m,e).
\end{equation}

A large gap accompanied by a low mean typically indicates over-reliance on a single weak cue.

We further encode modality availability using binary indicators that denote whether visual information is present for the mention and the entity, allowing the reasoning module to down-weight unreliable visual evidence when necessary.
Accordingly, the statistical evidence is defined as:
$$
\mathbf{f}_{\text{stat}}(m,e)
=
\big[
\mu(m,e),\
s_{\max}(m,e),\
\Delta(m,e),\
\mathbb{I}_{\text{img}}(m),\
\mathbb{I}_{\text{img}}(e)
\big].
$$

\subsection{Contextualized Graph Construction}
\label{subsec: graph_construction}
To obtain group-level evidence, we construct an undirected multimodal graph $\mathcal{G}=(\mathcal{V}, E_{graph})$ where $\mathcal{V}=\mathcal{M}\cup\mathcal{E}$. Nodes are initially encoded via CLIP into a shared embedding space. To refine connectivity, we combine modality-aware gating with LLM-driven semantic expansion.

\noindent \textbf{Step 1: CLIP-based Candidate Selection.}
We first construct coarse candidate edges using CLIP-based representations. Each node $v_i \in \mathcal{V}$ is associated with a textual embedding $\mathbf{t}_{v_i}$ and a visual embedding $\mathbf{v}_{v_i}$ (when available).
For any node pair $(v_i, v_j)$, where $v_i, v_j \in \mathcal{V}$ may correspond to a mention--entity (M--E), mention--mention (M--M), or entity--entity (E--E) pair, we define a modality-aware gated similarity:
\begin{equation}
\label{eq:gated_sim}
s_{\text{gate}}(v_i, v_j)
= \cos(\mathbf{t}_{v_i}, \mathbf{t}_{v_j})
+ \mathbb{I}_{ij} \cdot \cos(\mathbf{v}_{v_i}, \mathbf{v}_{v_j}),
\end{equation}
where $\mathbb{I}_{ij}=1$ if both $v_i$ and $v_j$ have visual embeddings, and $\mathbb{I}_{ij}=0$ otherwise.
This gating mechanism explicitly conditions similarity computation on the availability of visual modalities. When visual information is missing for either node, the similarity score is computed solely based on textual embeddings, thereby avoiding the introduction of noisy or undefined visual similarity terms.
Using $s_{\text{gate}}$, we construct a set of candidate edges by thresholding the pairwise gating score:
\begin{equation}
    E^{(0)}
    =
    \left\{
    (v_i, v_j) \in \mathcal{V} \times \mathcal{V}
    \;\middle|\;
    s_{\text{gate}}(v_i, v_j) \ge \delta_{\text{gate}},
    \right\}.
\end{equation}
where $\delta_{\text{gate}}$ is a predefined similarity threshold. The resulting candidate edge set $E^{(0)}$ includes mention--entity (M--E), mention--mention (M--M), and entity--entity (E--E) edges.
\begin{remark}
    To support incomplete multimodal inputs, our graph construction ensures that if a node lacks a visual embedding, it can still participate in graph construction through its textual representation. Specifically, in Eq.~(\ref{eq:gated_sim}), the visual similarity term is included only when both nodes have available images; otherwise, the similarity reduces to a purely textual score. This design avoids introducing unreliable visual evidence for incomplete nodes while preserving meaningful graph connectivity.
\end{remark}

\begin{tcolorbox}[
    enhanced,
    sharp corners,
    boxrule=0.5pt,
    colback=white,
    colframe=black,
    label={box:prompt_design_1},
    title=\textbf{Prompt Design A: Entity \& Mention Enrichment},
    fonttitle=\bfseries\sffamily,
    coltitle=black,
    attach boxed title to top left={yshift=-2mm, xshift=2mm},
    boxed title style={colback=white, colframe=white},
    drop shadow,
    after skip=12pt
]
\small

\textbf{\textsf{A: Entity Enhancement}} \\
\begin{enumerate}[label=(\arabic*), itemsep=0pt, parsep=2pt, leftmargin=*]
  \item \textbf{Task:} Entity Description Generation for MEL task
  \item \textbf{Inputs:}
    \begin{itemize}[label=$\bullet$, itemsep=0pt, leftmargin=*]
      \item Target Entity: \texttt{\{name\}}
      \item Entity Type: \texttt{\{instance\_type\}}
      \item Known Attributes: \texttt{"\{attr\_str\}"} (use \texttt{"No specific attributes provided."} if missing)
    \end{itemize}
  \item \textbf{Instructions:}
    \begin{enumerate}[label=(\alph*), itemsep=0pt, leftmargin=*]
      \item You are an expert encyclopedia editor. Identify the specific entity using name and attributes.
      \item Write a concise definition (30--50 words).
      \item \textbf{[CRITICAL]} Do not list attributes. Integrate them with specific facts (e.g., works, significance).
      \item If ambiguous, use attributes to pick the correct entity.
    \end{enumerate}
  \item \textbf{Output format:} \texttt{[Canonical Name]: [Definition]}
\end{enumerate}

\vspace{0.8\baselineskip}

\textbf{\textsf{B: Mention Augmentation}} \\
\begin{enumerate}[label=(\arabic*), itemsep=0pt, parsep=2pt, leftmargin=*]
  \item \textbf{Task:} Entity Linking Positive Pair Generation
  \item \textbf{Inputs:}
    \begin{itemize}[label=$\bullet$, itemsep=0pt, leftmargin=*]
      \item \texttt{Context}: \texttt{"\{context\}"}
      \item \texttt{Target Mention}: \texttt{"\{mention\}"}
    \end{itemize}
  \item \textbf{Instructions:}
    \begin{enumerate}[label=(\alph*), itemsep=0pt, leftmargin=*]
      \item Identify the unique canonical entity referred to by the mention in the given context.
      \item Provide the \textbf{Canonical Name}.
      \item Provide a concise definition (30--50 words) that uniquely identifies the entity \textbf{globally}, independent of the context.
    \end{enumerate}
  \item \textbf{Output format:} \texttt{[Canonical Name]: [Definition]}
\end{enumerate}

\end{tcolorbox}
\noindent \textbf{Step 2: LLM-enhanced Candidate Selection.}
To resolve ambiguity, we use LLMs to generate contextualized semantic descriptions.
Specifically, given the mention set $\mathcal{M}$ and the entity set $\mathcal{E}$, we use an LLM to generate enhanced textual descriptions for all mentions $m \in \mathcal{M}$ and for a selected subset of entities $\mathcal{E}_k \subset \mathcal{E}$.
The generated textual descriptions are encoded by
the CLIP text encoder, yielding contextualized text embeddings:
\begin{equation}
\label{eq:llm_clip_aug}
    \tilde{\mathbf{t}}_x
    =
    \mathrm{CLIP}_{\text{text}}
    \!\left(
    \mathrm{LLM}_{\theta}\big(\mathrm{Prompt}(x)\big)
    \right),
    \quad
    x \in \mathcal{M} \cup \mathcal{E}_k .
\end{equation}
Detailed prompt designs are provided in \hyperref[box:prompt_design_1]{Prompt Box~A}.
We then compute an expanded similarity $s_{llm}$ by fusing original and enhanced signals between each node pair $(v_i, v_j)$:
\begin{equation}
\label{eq:llm_fusion}
\small
    s_{\text{llm}}(v_i,v_j)
    =
    w_1 \cos(\mathbf{t}_{v_i}, \tilde{\mathbf{t}}_{v_j})
    +
    w_2 \cos(\tilde{\mathbf{t}}_{v_i}, \mathbf{t}_{v_j})
    +
    w_3 \cos(\tilde{\mathbf{t}}_{v_i}, \tilde{\mathbf{t}}_{v_j}).
\end{equation}
A semantic edge set $E^{(1)}$ is constructed by retrieving the Top-$K_{llm}$ neighbors for each node based on $s_{llm}$.

\noindent \textbf{Step 3: Edge Union for Final Graph.}
Finally, we combine the CLIP-based and LLM-enhanced candidate edges:
\begin{equation}
\label{eq:E_union}
E_{\text{graph}}=E^{(0)} \cup E^{(1)}.
\end{equation}
yielding the final undirected multimodal graph
$\mathcal{G}=(\mathcal{V},E_{\text{graph}})$.

Through this multi-stage construction process, \ModelName\,builds a high-fidelity contextualized graph that captures reliable neighborhood structures by combining coarse multimodal similarity cues with LLM-enhanced semantic signals~\cite{xu2026unlocking}. The constructed graph enables effective propagation of contextual information in the subsequent learning stage.

\subsection{Asymmetric Teacher–Student Alignment}
\label{subsec: graph_learning}

Directly learning representations across modalities is often unstable in MEL, because different modalities exhibit substantially different data quality: textual signals are generally more complete and semantically stable, whereas visual signals are more noisy, ambiguous, or missing. To address this challenge and learn robust group-level representations, we propose an asymmetric teacher-student alignment framework over two modality-specific graph views. Specifically, based on the same graph topology, we define a text view and an image view using textual and visual node features, respectively. The text-based GNN is trained with a structural contrastive loss to learn stable neighborhood structure, while the image-based GNN is optimized with both structural and cross-modal distillation loss from the text teacher. This design is motivated by the fact that textual evidence is generally more complete and stable, whereas visual evidence is more likely to be noisy or missing in real-world MEL data.

\noindent \textbf{4.3.1 Shared Subgraph Encoding.} 
For scalability, we utilize a subgraph-based encoding strategy. For each node $v_i$, a local subgraph $\mathcal{G}_{v_i}$ is extracted using Top-$K$ Personalized PageRank (PPR) scores~\cite{gasteiger2018predict}. We define two modal-specific views, $\mathcal{G}^{\text{text}}_{v_i}$ and $\mathcal{G}^{\text{img}}_{v_i}$, which share the same topology but use textual $\mathbf{t}_{v}$ and visual $\mathbf{v}_{v}$ features respectively.
\begin{remark}
    The text and image views are defined on the same subgraph topology derived from the contextualized graph construction in Section~\ref{subsec: graph_construction}.
    When a modality is unavailable for a node, we do not remove it from the view but mask it to zero instead. This design preserves structural consistency across views while explicitly modeling modality missingness.
\end{remark}

While the proposed framework is agnostic to the choice of GNN architecture, we adopt a Graph Convolutional Network (GCN) as the base graph encoder, denoted by $f_{\theta}(\cdot)$, for its computational efficiency and robust ability to capture local structural dependencies~\cite{kipf2016semi}.
Given a target node $v_i$ and its subgraph $\mathcal{G}_{v_i}$ with node features $\mathbf{x}_v\in\{\mathbf{t}_v,\mathbf{v}_v\}$,
the encoder produces contextualized representations $\{\mathbf{h}_v\}_{v \in \mathcal{V}_{v_i}}$ for all nodes in $\mathcal{G}_{v_i}$. In particular, the representation of the target node $v_i$ is:
\begin{equation}
\label{eq:graph_encoder}
    \mathbf{h}_{v_i}
    =
    f_{\theta}\!\left(\mathcal{G}_{v_i}, \{\mathbf{x}_v\}_{v\in\mathcal{V}_{v_i}}\right).
\end{equation}
To capture subgraph-level contextual information, we further aggregate the contextualized node representations in $\mathcal{G}_{v_i}$ through mean pooling~\cite{jiao2020sub}, yielding a neighborhood summary vector:
\begin{equation}
\label{eq:graph_pooling}
    \mathbf{s}_{v_i}
    =
    \frac{1}{|\mathcal{V}_{v_i}|}
    \sum_{v \in \mathcal{V}_{v_i}} \mathbf{h}_{v}.
\end{equation}

The text view and image view share the same subgraph structure, but use different node features, namely textual features and visual features, respectively.

\noindent \textbf{4.3.2 Text-based Teacher GNN Training.} 
In the first stage, we train a text-based GNN on the constructed contextualized graph using CLIP text embeddings as node features, i.e., $\mathbf{x}_{v}=\mathbf{t}_{v}$. The goal of this stage is to learn stable and semantically coherent contextualized representations from the graph structure.

Although no explicit LLM outputs are used as supervision in this training stage, the teacher GNN is trained on a graph whose topology has already been enriched during graph construction (Section~\ref{subsec: graph_construction}). In particular, the LLM contributes by introducing semantically enhanced edges in the graph construction step, so its effect is incorporated implicitly through the resulting neighborhood structure rather than through direct supervision signals.
Here we adopt a \textit{structural contrastive objective} to encourage consistency between a target node representation and its local neighborhood context. For a target node $v_i$, we obtain its contextualized representation $\mathbf{h}_{v_i}$ from the text GNN and a subgraph-level summary $\mathbf{s}_{v_i}$ via mean pooling over the corresponding subgraph. 
We treat $(\mathbf{h}_{v_i}, \mathbf{s}_{v_i})$ as a positive pair. 
To construct a negative pair, we randomly sample another node $v_j \neq v_i$ from the same mini-batch and pair $\mathbf{h}_{v_i}$ with the summary vector $\mathbf{s}_{v_j}$ of its subgraph. 
In other words, the negative pair is formed by shuffling the correspondence between target nodes and subgraph summaries within the batch~\cite{jiao2020sub}.
Here, the structural contrastive loss is defined as:
\begin{equation}
\label{eq:struct_loss}
    \mathcal{L}_{\text{struct}}^{\text{txt}}
    =
    \frac{1}{|\mathcal{B}|}
    \sum_{v_i \in \mathcal{B}}
    \max\!\left(
    0,\;
    \eta
    -
    \sigma(\mathbf{h}_{v_i}^{\top}\mathbf{s}_{v_i})
    +
    \sigma(\mathbf{h}_{v_i}^{\top}\mathbf{s}_{v_j})
    \right),
\end{equation}
where $\mathcal{B}$ denotes a mini-batch of target nodes, $\mathbf{s}_{v_j}$ is sampled from a shuffled subgraph, $\eta$ is a margin hyperparameter, and $\sigma(\cdot)$ denotes the sigmoid function.

The resulting text-based GNN serves as the teacher network, providing semantically grounded contextualized representations for subsequent cross-modal distillation. Accordingly, the training objective of this stage is $\mathcal{L}_{\text{text}}=\mathcal{L}_{\text{struct}}^{\text{txt}}$.

\begin{algorithm}[!ht]
\caption{Offline Group-Level Evidence Synthesis}
\label{alg:offline_synthesis_alignment}
\DontPrintSemicolon

\KwIn{Mention set $\mathcal{M}$, entity set $\mathcal{E}$,\\
\hspace*{2.9em}textual features $\{\mathbf{t}_v\}$, visual features $\{\mathbf{v}_v\}$}
\KwOut{Contextualized graph $\mathcal{G}=(\mathcal{V},E_{\mathrm{graph}})$, teacher encoder $f_{\theta}^{txt}$, student encoder $f_{\theta}^{img}$}

\vspace{2pt}
\tcp{\textbf{Part I: Contextualized Graph Construction}}
Initialize node set $\mathcal{V} \leftarrow \mathcal{M} \cup \mathcal{E}$ and $E^{(0)} \leftarrow \emptyset$\;

\tcp{(1) CLIP-based candidate selection}
\ForEach{$(v_i, v_j) \in \mathcal{V} \times \mathcal{V}$}{
    Compute $s_{\mathrm{gate}}(v_i, v_j)$ by Eq.~\eqref{eq:gated_sim}\;
    \If{$s_{\mathrm{gate}}(v_i, v_j) \ge \delta_{\mathrm{gate}}$}{
        $E^{(0)} \leftarrow E^{(0)} \cup \{(v_i, v_j)\}$\;
    }
}

\tcp{(2) LLM-enhanced candidate selection}
\ForEach{$x \in \mathcal{M} \cup \mathcal{E}_k$}{
    Generate LLM-enhanced embedding $\tilde{\mathbf{t}}_x$\;
}

\tcp{(3) LLM-enhanced edge construction}
Initialize $E^{(1)} \leftarrow \emptyset$\;
\ForEach{$v_i \in \mathcal{V}$}{
    Compute fused similarity $s_{\mathrm{llm}}(v_i, v_j)$\;
    Retrieve Top-$K_{\mathrm{llm}}$ neighbors to update $E^{(1)}$\;
}
Set $E_{\mathrm{graph}} \leftarrow E^{(0)} \cup E^{(1)}$\;
Set $\mathcal{G} \leftarrow (\mathcal{V}, E_{\mathrm{graph}})$\;

\vspace{2pt}
\tcp{\textbf{Part II: Teacher--Student Alignment}}
\tcp{Preparation: subgraph extraction}
\ForEach{$v_i \in \mathcal{V}$}{
    Extract local subgraph $\mathcal{G}_{v_i}$ via PPR\;
}

\tcp{Stage I: Text-based Teacher Training}
Initialize teacher encoder $f_{\theta}^{txt}$\;
\While{not converged}{
    Compute $\mathbf{h}_{v_i}$ and $\mathbf{s}_{v_i}$ by Eq.~\eqref{eq:graph_encoder} and Eq.~\eqref{eq:graph_pooling}\;
    Minimize $\mathcal{L}_{\mathrm{text}}=\mathcal{L}^{\mathrm{txt}}_{\mathrm{struct}}$ by Eq.~\eqref{eq:struct_loss}\;
}
Freeze $f_{\theta}^{txt}$\;

\tcp{Stage II: Image-based Student Distillation}
Initialize student encoder $f_{\theta}^{img}$\;
\While{not converged}{
    Obtain teacher representation $\mathbf{h}_{v_i}^{txt}$ from $f_{\theta}^{txt}$\;
    Compute student representation $\mathbf{h}_{v_i}^{img}$\;
    Minimize $\mathcal{L}_{\mathrm{img}}=\mathcal{L}_{\mathrm{struct}}^{img}+\lambda_1 \mathcal{L}_{\mathrm{xmodal}}$ by Eq.~\eqref{eq:img_loss}\;
}

\Return{$\mathcal{G}, f_{\theta}^{txt}, f_{\theta}^{img}$}\;
\end{algorithm}
\setlength{\textfloatsep}{2pt}
\noindent \textbf{4.3.3 Image-based Student GNN Distillation.} 
While textual representations learned in Stage I are stable and semantically informative, visual features are often noisy, ambiguous, or entirely missing. Directly learning image representations from the graph may therefore propagate noise and lead to unreliable embeddings. To address this issue, we treat the text-based GNN as a reliable semantic reference and distill its knowledge to guide visual representation learning. 

Specifically, we freeze the text-based GNN trained in Stage I as a teacher network. Given the same subgraph $\mathcal{G}_{v_i}$, the teacher produces a fixed textual representation $\mathbf{h}^{\text{txt}}_{v_i}$, while an image-based GNN (the \emph{student}) takes CLIP image features $\mathbf{x}_v=\mathbf{v}_v$ as input and outputs a learnable visual representation $\mathbf{h}^{\text{img}}_{v_i}$.
To align the student with the teacher in a shared semantic space, we introduce a cross-modal contrastive objective. For each target node $v_i$ in a mini-batch $\mathcal{B}$, we treat $(\mathbf{h}^{\text{img}}_{v_i}, \mathbf{h}^{\text{txt}}_{v_i})$ as a positive pair, and use the teacher representations of all other nodes in the same mini-batch, i.e., $\{\mathbf{h}^{\text{txt}}_{v_j}\}_{v_j \in \mathcal{B},\, v_j \neq v_i}$, as negatives. In this way, the image representation of each node is encouraged to align with its matched text representation, while being separated from the text representations of non-matching nodes in the same mini-batch.
The cross-modal InfoNCE loss is defined as~\cite{oord2018representation}:
\begin{equation}
\label{eq:xmodal_loss}
    \mathcal{L}_{\text{xmodal}}
    =
    -\frac{1}{|\mathcal{B}|}
    \sum_{v_i \in \mathcal{B}}
    \log
    \frac{
    \exp\!\left(
    \mathrm{sim}(\mathbf{h}^{\text{img}}_{v_i}, \mathbf{h}^{\text{txt}}_{v_i}) / \tau
    \right)
    }{
    \sum_{v_j \in \mathcal{B}}
    \exp\!\left(
    \mathrm{sim}(\mathbf{h}^{\text{img}}_{v_i}, \mathbf{h}^{\text{txt}}_{v_j}) / \tau
    \right)
    },
\end{equation}
where $\mathrm{sim}(\cdot,\cdot)$ denotes cosine similarity and $\tau$ is a temperature hyperparameter controlling the sharpness.
Here, all non-matching teacher representations within the same mini-batch are treated as negatives.
In addition to cross-modal alignment, we apply the same structural contrastive objective to the image view. Specifically, we use the student representations $\mathbf{h}^{\text{img}}$ and the corresponding subgraph summaries $\mathbf{s}^{\text{img}}$ to define the image-side structural loss $\mathcal{L}_{\text{struct}}^{\text{img}}$, which has the same definition as the structural contrastive loss in Eq.~\ref{eq:struct_loss} and differs only in that it is computed from image-based representations.
The overall training objective for the image-based student GNN is:
\begin{equation}
\label{eq:img_loss}
    \mathcal{L}_{\text{img}}
    =
   \mathcal{L}_{\text{struct}}^{\text{img}}
    +
    \lambda_1 \mathcal{L}_{\text{xmodal}},
\end{equation}
where $\lambda_1$ is a weighting hyperparameter that balances the structural contrastive loss with the cross-modal InfoNCE loss.

After training, the text-based and image-based GNNs yield complementary contextualized representations, which are jointly used in the subsequent online evidence reasoning stage.

The overall offline procedure of \ModelName{} is summarized in Algorithm~\ref{alg:offline_synthesis_alignment}, which consists of two phases: contextualized graph construction and teacher--student alignment. In Part I, we first initialize the node set $\mathcal{V} = \mathcal{M} \cup \mathcal{E}$ and construct the initial edge set $E^{(0)}$ by computing modality-aware gated similarities for node pairs (Lines 1--5). We then generate LLM-enhanced embeddings for mentions and selected entities (Lines 6--7). Based on these enhanced embeddings, we compute fused similarities and retrieve Top-$K_{llm}$ neighbors to construct the semantic edge set $E^{(1)}$ (Lines 8--11). The final contextualized graph $\mathcal{G}$ is obtained by combining $E^{(0)}$ and $E^{(1)}$ (Lines 12--13). In Part II, we perform teacher--student alignment to learn robust group-level representations. We first extract local subgraphs for all nodes via Personalized PageRank (Lines 14--15). In Stage I, a text-based teacher encoder is trained with the structural contrastive loss to capture stable neighborhood semantics (Lines 16--19), after which the teacher is frozen (Line 20). In Stage II, an image-based student encoder is optimized under the guidance of the frozen teacher (Lines 21--25). By minimizing a joint objective that combines image-side structural learning with cross-modal alignment, the student is encouraged to align noisy visual representations with the more stable semantic space provided by the teacher.

To theoretically support the robustness of this teacher-student alignment strategy, we present the following theorem.
\begin{theorem}
\label{theorem:ts}
    Let $\mathbf{h}_{v}^{} \in \mathbb{R}^{d}$ denote the latent ideal semantic representation of node $v$, and let $\mathbf{h}_{v}^{\mathrm{txt}}$ and $\mathbf{h}_{v}^{\mathrm{img}}$ denote the teacher and student representations, respectively. Assuming the teacher is a stable approximation to the ideal target such that $\|\mathbf{h}_{v}^{\mathrm{txt}} - \mathbf{h}_{v}^{*}\|_2 \le \delta_t$ for some bounded error $\delta_t \ge 0$, then the student error satisfies:
    *$$\|\mathbf{h}_{v}^{\mathrm{img}} - \mathbf{h}_{v}^{*}\|_2^2 \le 2\|\mathbf{h}_{v}^{\mathrm{img}} - \mathbf{h}_{v}^{\mathrm{txt}}\|_2^2 + 2\delta_t^2,$$
    indicating that minimizing the teacher–student discrepancy $\|\mathbf{h}_{v}^{\mathrm{img}} - \mathbf{h}_{v}^{\mathrm{txt}}\|_2^2$ effectively tightens the upper bound on the student’s error relative to the ideal semantic target. 
\end{theorem}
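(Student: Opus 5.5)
The bound follows from the triangle inequality combined with the elementary inequality $(a+b)^2 \le 2a^2 + 2b^2$; no result stated earlier in the paper is needed. I would work with the three vectors $\mathbf{h}_{v}^{\mathrm{img}}$, $\mathbf{h}_{v}^{\mathrm{txt}}$ and $\mathbf{h}_{v}^{*}$ in $\mathbb{R}^{d}$. The symbol $\mathbf{h}_{v}$ in the statement is read as the ideal target $\mathbf{h}_{v}^{*}$.

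First, add and subtract the teacher representation:
\begin{equation*}
\mathbf{h}_{v}^{\mathrm{img}} - \mathbf{h}_{v}^{*} = \big(\mathbf{h}_{v}^{\mathrm{img}} - \mathbf{h}_{v}^{\mathrm{txt}}\big) + \big(\mathbf{h}_{v}^{\mathrm{txt}} - \mathbf{h}_{v}^{*}\big).
\end{equation*}
Next, apply the triangle inequality for $\|\cdot\|_2$. Writing $a = \|\mathbf{h}_{v}^{\mathrm{img}} - \mathbf{h}_{v}^{\mathrm{txt}}\|_2$ and $b = \|\mathbf{h}_{v}^{\mathrm{txt}} - \mathbf{h}_{v}^{*}\|_2$, this gives $\|\mathbf{h}_{v}^{\mathrm{img}} - \mathbf{h}_{v}^{*}\|_2 \le a + b$. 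Since both sides are nonnegative, we may square to get $\|\mathbf{h}_{v}^{\mathrm{img}} - \mathbf{h}_{v}^{*}\|_2^2 \le (a+b)^2$.

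Then bound $(a+b)^2 \le 2a^2 + 2b^2$. This holds because $2a^2 + 2b^2 - (a+b)^2 = (a-b)^2 \ge 0$; equivalently, it is convexity of $x \mapsto x^2$, or Young's inequality $2ab \le a^2 + b^2$. An alternative route skips the triangle inequality: expand $\|\mathbf{x}+\mathbf{y}\|_2^2 = \|\mathbf{x}\|_2^2 + \|\mathbf{y}\|_2^2 + 2\mathbf{x}^{\top}\mathbf{y}$ and bound the cross term by Cauchy--Schwarz and AM--GM.

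Finally, substitute the teacher-stability assumption $b \le \delta_t$. Because $b \ge 0$, this gives $b^2 \le \delta_t^2$, and the stated inequality follows. The interpretive remark in the statement is then immediate. The right-hand side is $2\|\mathbf{h}_{v}^{\mathrm{img}} - \mathbf{h}_{v}^{\mathrm{txt}}\|_2^2$ plus a constant $2\delta_t^2$ that does not depend on the student. Reducing the teacher--student discrepancy therefore lowers the bound, and the bound never falls below the floor $2\delta_t^2$ set by the teacher's own error.

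There is no real technical obstacle here. The only points that need care are, first, stating explicitly that $\mathbf{h}_{v}$ in the statement denotes $\mathbf{h}_{v}^{*}$, and second, noting that squaring the triangle inequality is legitimate because both sides are nonnegative.
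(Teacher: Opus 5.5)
Your proof is correct and matches the paper's argument: insert $\mathbf{h}_{v}^{\mathrm{txt}}$, apply the triangle inequality, square, use $(a+b)^2 \le 2a^2 + 2b^2$, and substitute $\|\mathbf{h}_{v}^{\mathrm{txt}} - \mathbf{h}_{v}^{*}\|_2 \le \delta_t$. You also state two points the paper leaves implicit: that $\mathbf{h}_{v}$ in the statement must mean $\mathbf{h}_{v}^{*}$, and that squaring the triangle inequality is valid because both sides are nonnegative.
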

    
    \section{Multi-Perspective Evidence Reasoning}
    \label{sec.online}
Following the offline generation of multi-perspective evidence, this section presents the online inference stage of \ModelName{}.
While LLMs offer strong reasoning ability, directly applying them to end-to-end entity linking over a large candidate space is computationally expensive and time-consuming.
To improve efficiency while maintaining accuracy, we adopt an online reasoning process that leverages the synthesized multi-perspective evidence for effective disambiguation.
Specifically, the online stage consists of:
(1) Candidate selection (Section~\ref{subsec:candidate_selection}): a retrieval-based strategy for constructing a high-recall candidate set;
and (2) LLM-Driven evidence re-ranking (Section~\ref{subsec:llm_reasoning}): which induces an interpretable decision tree from the synthesized evidence, defining a transparent reasoning path for refined ranking.

\subsection{Candidate Selection}
\label{subsec:candidate_selection}

To support effective contextual evidence reasoning in the online stage, the first step is to construct a high-quality candidate set that maximizes recall while maintaining practical efficiency. Rather than relying on a single similarity signal, we adopt a multi-perspective fast retrieval strategy that aggregates complementary instance-centric, group-level, and lexical evidence. 
Importantly, we do not include statistical evidence in this retrieval stage, which is deliberately restricted to relevance-oriented signals; statistical evidence is instead deferred to the reasoning stage, as it primarily reflects evidence reliability and conflict patterns rather than directly rankable similarity.

Specifically, we perform candidate generation using a hybrid 9-way retrieval scheme. This scheme combines similarity scores from 8 channels, comprising text-text (TT), text-image (TV), image-text (VT), and image-image (VV) combinations for both original CLIP and GNN-enhanced embeddings, alongside one lexical retrieval channel based on the string overlap ratio. For each of the 8 retrieval channels (4 views $\times$ 2 embedding types), we retrieve the top-$K_{\text{ch}}$ nearest entities using FAISS, and merge them via union to form the final candidate set.
Formally, let $\mathcal{S}=\{TT, TV, VT, VV\}$ denote the set of modality-combination views and $\mathcal{T}=\{inst, group\}$ represent the embedding types. We construct the candidate set $\mathcal{C}(m)$ by aggregating the top $K_{ch}$ nearest entities from eight channels and one lexical channel:
\begin{equation}
\small
    \mathcal{C}(m) = \left( \bigcup_{t \in \mathcal{T}} \bigcup_{s \in \mathcal{S}} \text{Top-}K_{\text{ch}}(s_{t}^{s}(m,e)) \right) \cup \text{Top-}K_{\text{ch}}(s_{lex}(m,e))
\end{equation}
where $s_{t}^{s}(m,e)$ denotes the cosine similarity for view $s$ and embedding type $t$, and $s_{lex}(m,e)$ represents the string overlap ratio. 
In our implementation, we evenly allocate the retrieval budget by setting $K_{ch} = 250$ for each of the nine channels, yielding a maximum candidate pool size of $|\mathcal{C}(m)| \le 2250$.

For each retrieved candidate $e \in \mathcal{C}(m)$, we compute an initial fused score $s_{prior}(m,e)$ by aggregating the normalized similarities across all retrieval views. This score serves as a global prior that reflects the baseline relevance of the candidate, providing a stable starting point for subsequent fine-grained reasoning.

\subsection{LLM-Driven Re-ranking}
\label{subsec:llm_reasoning}

While the candidate selection stage ensures a high-recall candidate set $\mathcal{C}(m)$ by aggregating both instance-centric, group-level and lexical evidence, the resulting prior score $s_{prior}(m,e)$ remains a coarse numerical aggregation and is insufficient for resolving ambiguities.
On the other hand, directly applying LLMs to reason over all candidates would be time-consuming and computationally expensive.
To balance effectiveness and efficiency, we therefore leverage the LLM to induce a structured decision-tree re-ranking strategy based on the multi-perspective evidence vector $\mathbf{f}(m,e) \in \mathbb{R}^{14}$, enabling effective re-ranking of candidate entities through interpretable evidence-based reasoning.

\noindent \textbf{LLM-Guided Evidence Reasoning.}
Instead of treating the LLM as a black-box scorer, we formulate the ranking process as a conditional decision strategy induction task, where the model explicitly and systematically reasons over heterogeneous evidence to refine candidate rankings. As specified in \hyperref[box:prompt_design_2]{Prompt Box~B}, we provide the LLM with:

\begin{itemize}
    \item Inputs: The initial prior score $s_{prior}$ and the structured 14-dimensional evidence features encompassing instance-centric similarity, graph-enhanced context, and modality availability.
    \item Guidelines: A set of reasoning constraints that direct the LLM to prioritize cross-view consistency, handle missing modalities toward single-modality dominance.
\end{itemize}
\noindent The LLM generates an executable decision tree for candidate re-ranking, in which each node evaluates a specific evidence type (e.g., $f_{lex}$, $f_{group}$), and each branch is associated with a score adjustment $\Delta_n$ generated by the LLM as part of the reasoning strategy, rather than being manually predefined.

This reasoning-centric approach defines an interpretable reasoning path for each mention-entity pair. The final ranking function is formally defined as:
\begin{equation}
\pi_{\text{LLM}}(m,e) = s_{prior}(m,e) + \sum_{n \in \text{path}} \Delta_n(\mathbf{f}(m,e)),
\end{equation}
where $\Delta_n$ is the score adjustment generated by the LLM for decision node $n$ along the reasoning path.
\begin{tcolorbox}[
    enhanced,
    sharp corners,
    boxrule=0.5pt,
    colback=white,
    colframe=black,
    label={box:prompt_design_2},
    title=\textbf{Prompt Design B: LLM-Guided Evidence Reason},
    fonttitle=\bfseries\sffamily,
    coltitle=black,
    attach boxed title to top left={yshift=-2mm, xshift=2mm},
    boxed title style={colback=white, colframe=white},
    drop shadow
]
\small

\begin{enumerate}[label=(\arabic*), itemsep=0pt, parsep=2pt, leftmargin=*]

  \item \textbf{Instruction:}  
  You are tasked with designing a ranking strategy for MEL by reasoning over structured evidence.

  \item \textbf{Input:}
    \begin{itemize}[label=$\bullet$, itemsep=0pt, leftmargin=*]
      \item Interpretable evidence features capturing multi-modal instance-centric similarity, group-level similarity, lexical evidence, statistical consistency (mean, max, gap), and modality availability;
      \item A prior score aggregate from the above evidence.
    \end{itemize}

  \item \textbf{Guidelines:}
    \begin{itemize}[label=$\bullet$, itemsep=0pt, leftmargin=*]
      \item Treat the prior score as a reference, not a final decision.
      \item Prefer candidates supported by consistent evidence across multiple perspectives.
      \item Be cautious when high similarity is primarily dominated by a single modality, as such evidence may be unreliable.
      \item Be cautious when visual evidence is missing.
    \end{itemize}

  \item \textbf{Output:}  
  Generate an executable ranking function that combines the base score and evidence features to produce a final ranking score. Output only the ranking logic.

\end{enumerate}
\end{tcolorbox}

\begin{figure}[!t]
    \centering
    \includegraphics[width=1.05\linewidth]{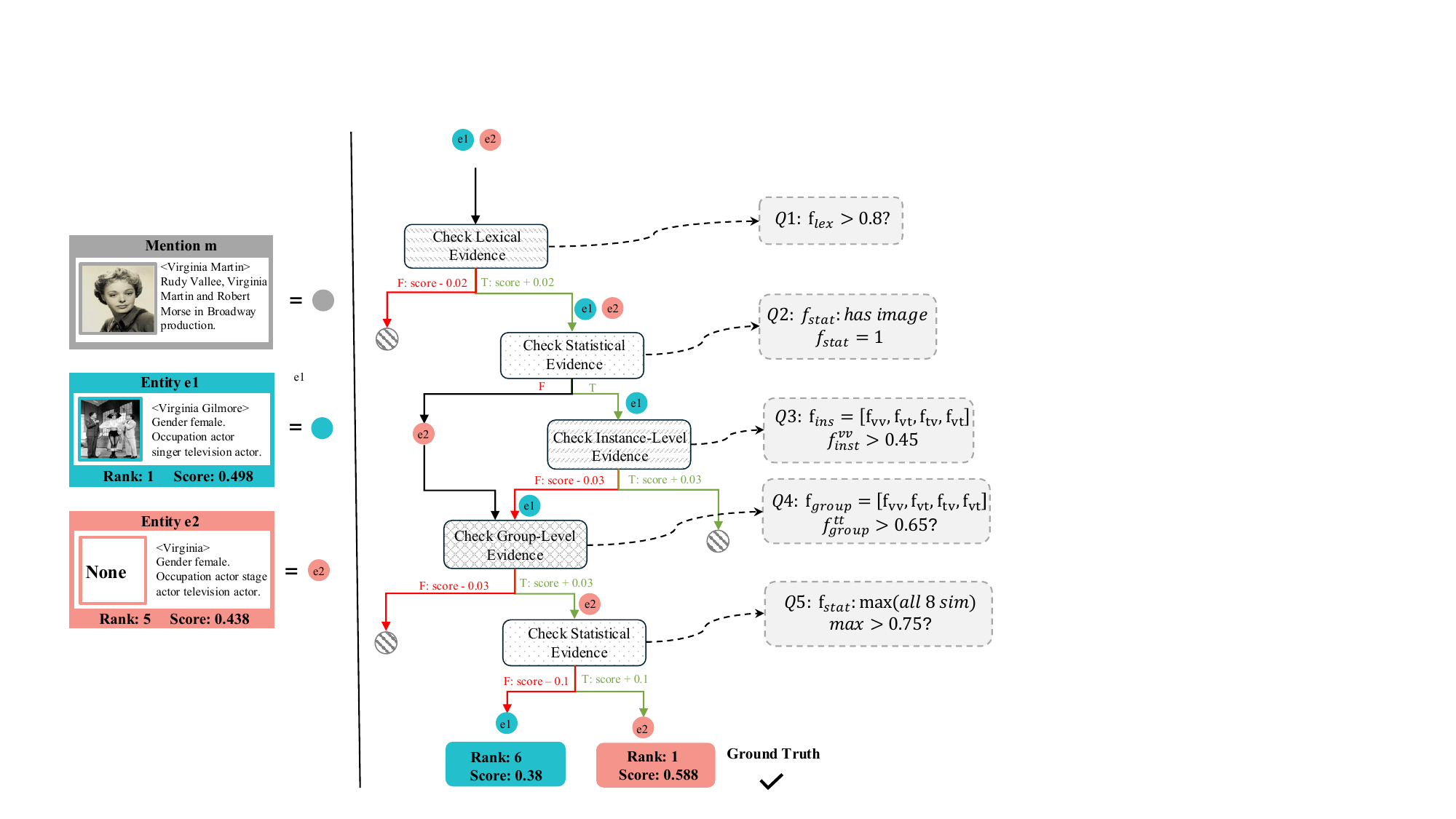}
    \caption{An example of the LLM-driven online reasoning process. The prior scorer is misled by noisy visual similarity for candidate $e_{1}$. The LLM module corrects this through a conditional decision tree. Nodes represent distinct evidence types (e.g., $\mathbf{f}_{lex}$, $\mathbf{f}_{group}$), and Q1–Q5 denote specific reasoning questions evaluating signal reliability. This logic promotes ground-truth $e_{2}$.}
    \label{fig:case_study}
\end{figure}
As illustrated in the representative case in Figure~\ref{fig:case_study}, the initial retrieval stage misranks the incorrect entity $e_1$ at the top due to deceptive visual instance similarity ($s_{prior} = 0.498$), while the ground-truth $e_2$ is ranked 5th ($s_{prior} = 0.438$). Our module corrects this by executing a structured path from Q1 to Q5:
\begin{itemize}
    \item Logical Verification (Q1-Q2): The module first identifies a lack of strong lexical evidence ($f_{lex} < 0.8$) and confirms the presence of visual data for $e_1$ ($f_{stat}=1$), initializing a skeptical evaluation of the visual signal.
    \item Conflict Resolution (Q3-Q4): It detects that while $e_1$ exhibits high instance-level visual similarity ($f_{inst}^{vv} > 0.45$), it lacks supporting group-level contextual consistency ($f_{group}^{tt} < 0.65$).
    \item Core Refinement: By sequentially penalizing this noisy visual dominance ($\Delta < 0$) and rewarding $e_2$’s consistent contextual alignment across evidence cues, the system progressively demotes $e_1$ (Final Score: 0.38) and promotes the ground-truth $e_2$ to the top rank (Final Score: 0.588).
\end{itemize}
This mechanism ensures that \ModelName{} selectively attends to informative signals while down-weighting unreliable ones, yielding a final ranking that robustly reflects both evidence strength and cross-evidence consistency.

The online inference process of \ModelName{} is summarized in Algorithm~\ref{alg:reranking}, which follows candidate selection, prior scoring, and LLM-driven re-ranking. In Step 1, the framework constructs a high-recall candidate set $\mathcal{C}(m)$ through a 9-way candidate selection scheme. Specifically, it aggregates the Top-$K_{ch}$ entities from eight neural similarity channels, corresponding to four modality-combination views over both instance-centric and group-level embeddings, together with one lexical channel based on $s_{lex}$ (Lines 1--6). In Step 2, an initial prior score $s_{prior}(m,e)$ is computed for each candidate from the normalized retrieval scores, providing a stable baseline for global relevance (Lines 7--8). In Step 3, the framework performs LLM-driven re-ranking over the retrieved candidates. For each candidate, the LLM identifies a decision path in the induced decision tree and refines the prior score by accumulating score adjustments $\Delta_n$ based on the multi-perspective evidence vector $\mathbf{f}(m,e)$ (Lines 9--11). This yields the final score $s_{final}(m,e)$, which enables the model to account for evidence reliability and cross-evidence consistency in an interpretable manner. The candidates are then ranked by $s_{final}(m,e)$ and the entity with the highest score is returned as the final prediction (Lines 12--13).








\begin{algorithm}[!t]
\caption{Online Evidence Reasoning}
\label{alg:reranking}
\DontPrintSemicolon

\KwIn{Query mention $m$, entity set $\mathcal{E}$, instance-centric and group-level embeddings, lexical score $s_{lex}$, LLM-induced decision tree $\pi_{\mathrm{LLM}}$}
\KwOut{Predicted entity $\hat{e}$}

\vspace{2pt}
\tcp{\textbf{Step 1: Candidate Selection}}
Initialize candidate set $\mathcal{C}(m) \leftarrow \emptyset$\;
\ForEach{$t \in \{inst, group\}$}{
    \ForEach{$s \in \{TT, TV, VT, VV\}$}{
        Retrieve Top-$K_{\mathrm{ch}}$ entities using $s_t^s(m,e)$\;
        $\mathcal{C}(m) \leftarrow \mathcal{C}(m) \cup \mathrm{Top}\text{-}K_{\mathrm{ch}}(s_t^s)$\;
    }
}
$\mathcal{C}(m) \leftarrow \mathcal{C}(m) \cup \mathrm{Top}\text{-}K_{\mathrm{ch}}(s_{lex})$\;

\vspace{2pt}
\tcp{\textbf{Step 2: Prior Scoring}}
\ForEach{$e \in \mathcal{C}(m)$}{
    Compute $s_{\mathrm{prior}}(m,e)$ from normalized retrieval scores\;
}

\vspace{2pt}
\tcp{\textbf{Step 3: LLM-Driven Re-ranking}}
\ForEach{$e \in \mathcal{C}(m)$}{
    Identify the decision path in $\pi_{\mathrm{LLM}}$\;
    Compute $s_{\mathrm{final}}(m,e)=s_{\mathrm{prior}}(m,e)+\sum_{n \in \mathrm{path}} \Delta_n(\mathbf{f}(m,e))$\;
}

Rank candidates in $\mathcal{C}(m)$ by $s_{\mathrm{final}}(m,e)$\;
\Return{$\hat{e}=\arg\max_{e \in \mathcal{C}(m)} s_{\mathrm{final}}(m,e)$}\;
\end{algorithm}

    \section{Complexity Analysis and Proofs}
    \label{sec.proof}
    \subsection{Complexity Analysis}
The computational complexity of \ModelName{} consists of an offline stage and an online stage. In the offline stage, let $N=|V|$ be the number of nodes, $d$ the hidden dimension, $L$ the number of GNN layers, and $|E_{\mathrm{graph}}|$ the number of graph edges. The cost of multimodal node encoding is
$
O(NT_{\mathrm{enc}})
$.
The cost of contextualized graph construction can be written as
$
O(NT_{\mathrm{enc}} + T_{\mathrm{graph}})
$,
where $T_{\mathrm{graph}}$ denotes the cost of candidate edge retrieval and graph assembly, which is subquadratic in practice under efficient nearest-neighbor indexing. Given the constructed graph, one epoch of teacher/student GNN training costs
$
O(L|E_{\mathrm{graph}}|d)
$,
and thus the total offline complexity is
$
O(NT_{\mathrm{enc}} + T_{\mathrm{graph}} + ET_{\mathrm{gnn}})
=
O(NT_{\mathrm{enc}} + T_{\mathrm{graph}} + E L |E_{\mathrm{graph}}| d),
$
where $E$ is the number of training epochs. In the online stage, for each query mention $m$, multi-perspective candidate selection performs 9 retrieval channels, giving
$
O(9T_{\mathrm{ret}}).
$
Let $|\mathcal{C}(m)|$ denote the retrieved candidate size. The subsequent LLM-based re-ranking is applied only to $\mathcal{C}(m)$, with complexity
$
O(|\mathcal{C}(m)|T_{\mathrm{llm}}).
$
Therefore, the overall online complexity per query is
$
O(9T_{\mathrm{ret}} + |\mathcal{C}(m)|T_{\mathrm{llm}}).
$
Overall, the offline cost is incurred once and can be amortized over all test queries, while the online cost scales with the retrieval cost and the size of the candidate set rather than the full entity space.

\subsection{Theoretical Proofs}
In this section, we provide the proofs of Theorem~\ref{theorem:evidence} and Theorem~\ref{theorem:ts}, which justify the effectiveness of multi-perspective evidence synthesis and asymmetric teacher--student alignment in \ModelName{} and provide theoretical support for our core design.

\noindent \textbf{Proof of Theorem~\ref{theorem:evidence}.} Let $y(m,e) \in \mathbb{R}$ denote the latent compatibility score between a mention $m$ and a candidate entity $e$. We assume each evidence channel $k \in \{1, \dots, K\}$ provides a noisy estimator $z_k(m,e) = y(m,e) + \varepsilon_k(m,e)$, where $\varepsilon_k$ represents zero-mean observation noise ($\mathbb{E}[\varepsilon_k]=0$) with a covariance matrix $\Sigma \in \mathbb{R}^{K \times K}$ for the error vector $\boldsymbol{\varepsilon}=[\varepsilon_1, \dots, \varepsilon_K]^\top$. The multi-perspective model $\theta^{mp}$ constructs a fused estimator $\hat{y} = \sum_{k=1}^K w_k z_k$ with weights $w_k \ge 0$ and $\sum w_k = 1$, yielding an estimation risk $\mathcal{R}(\theta^{mp}) = \mathbb{E}[(\hat{y}-y)^2] = \mathbb{E}[(\sum w_k \varepsilon_k)^2] = \mathbf{w}^\top \Sigma \mathbf{w}$. To find the minimum-risk fusion, we solve the constrained optimization problem $\min_{\mathbf{w}} \mathbf{w}^\top \Sigma \mathbf{w}$ subject to $\mathbf{1}^\top \mathbf{w} = 1$. Using the Lagrange multiplier method, the optimal weights are found to be $\mathbf{w}^{*} = \Sigma^{-1}\mathbf{1} / (\mathbf{1}^\top \Sigma^{-1}\mathbf{1})$, which results in the minimum risk:
$$\mathcal{R}(\theta^{mp}) = \frac{1}{\mathbf{1}^\top \Sigma^{-1}\mathbf{1}}.$$
Conversely, a single-perspective model $\theta^{sp}$ relying on an isolated channel $k$ corresponds to a weight vector $\mathbf{w} = \mathbf{e}_k$, resulting in a risk of $\mathcal{R}(\theta^{sp}) = \mathbf{e}_k^\top \Sigma \mathbf{e}_k = \Sigma_{kk}$. Since $\mathbf{w}^*$ is the global minimizer for the risk function over the probability simplex, it follows that $\mathcal{R}(\theta^{mp}) \le \mathcal{R}(\theta^{sp})$ for all $k \in \{1, \dots, K\}$. Furthermore, if at least two evidence channels are non-redundant, meaning that the noise terms are not perfectly correlated, the inequality becomes strict, thereby proving the theorem.

\noindent \textbf{Proof of Theorem~\ref{theorem:ts}.}
To analyze the robustness of the student representation $\mathbf{h}_{v}^{\mathrm{img}}$, we decompose its total error relative to the ideal semantic target $\mathbf{h}_{v}^{*}$ into the alignment error and the teacher's intrinsic approximation error. By the triangle inequality in the representation space, we have $\|\mathbf{h}_{v}^{\mathrm{img}} - \mathbf{h}_{v}^{*}\|_2 \le \|\mathbf{h}_{v}^{\mathrm{img}} - \mathbf{h}_{v}^{\mathrm{txt}}\|_2 + \|\mathbf{h}_{v}^{\mathrm{txt}} - \mathbf{h}_{v}^{*}\|_2$. To relate this to our optimization objective, we square the bound and apply the inequality $(a+b)^2 \le 2a^2 + 2b^2$:
$$
\begin{aligned}
\|\mathbf{h}_{v}^{\mathrm{img}} - \mathbf{h}_{v}^{*}\|_2^2 &\le \left( \|\mathbf{h}_{v}^{\mathrm{img}} - \mathbf{h}_{v}^{\mathrm{txt}}\|_2 + \|\mathbf{h}_{v}^{\mathrm{txt}} - \mathbf{h}_{v}^{*}\|_2 \right)^2 \\
&\le 2\|\mathbf{h}_{v}^{\mathrm{img}} - \mathbf{h}_{v}^{\mathrm{txt}}\|_2^2 + 2\|\mathbf{h}_{v}^{\mathrm{txt}} - \mathbf{h}_{v}^{*}\|_2^2.
\end{aligned}
$$
By substituting the stability assumption $\|\mathbf{h}_{v}^{\mathrm{txt}} - \mathbf{h}_{v}^{*}\|_2 \le \delta_t$ into the second term of the expansion, we obtain the final error bound:
$$\|\mathbf{h}_{v}^{\mathrm{img}} - \mathbf{h}_{v}^{*}\|_2^2 \le 2\|\mathbf{h}_{v}^{\mathrm{img}} - \mathbf{h}_{v}^{\mathrm{txt}}\|_2^2 + 2\delta_t^2.$$
This bound clearly shows that the error of the image-based student is jointly controlled by two factors: the discrepancy between the student and the teacher, and the teacher's own approximation error relative to the ideal semantic target. Therefore, if the textual teacher provides a stable semantic reference, encouraging alignment between the image student and the teacher helps effectively constrain the student's error and thereby improves the robustness of visual representation learning.

    \section{Experiments}
    \label{sec.experiments}
    \subsection{Experimental Setup}

\begin{table}[t]
    \centering
    \setlength{\tabcolsep}{3.5pt}
    \renewcommand{\arraystretch}{1.25} 
    \caption{The Statistics of MEL Datasets}
    \label{tab: statistics_of_datasets} 
    \footnotesize
    \begin{tabular}{|l|*{3}{c|}}
        \hline
        \textbf{Statistics} & 
        WikiMEL~\cite{wang2022multimodal} & 
        RichpediaMEL~\cite{wang2022multimodal} & 
        WikiDiverse~\cite{wang2022wikidiverse} \\
        \hline
        \# mentions            & 25,846    & 17,805    & 15,093 \\
        \# images of mentions  & 22,136 (85.6\%)   & 15,852 (89.0\%)    & 6,697 (44.4\%) \\
        \# entities            & 109,976   & 160,935   & 132,460  \\
        \# images of entities  & 67,195 (61.1\%)   & 86,769 (53.9\%)   & 67,309 (50.8\%) \\
        image ratio    & 67.26\%    & 57.41\%  & 50.16\% \\
        \hline
    \end{tabular}
    \vspace{7pt}
\end{table}
\begin{table*}[t]
\centering
\caption{Main results on multimodal entity linking benchmarks. We report Hit@1/5/10 on WikiMEL~\cite{wang2022multimodal}, RichpediaMEL,~\cite{wang2022multimodal}, and WikiDiverse~\cite{wang2022wikidiverse}, comparing our unsupervised method with representative supervised and unsupervised baselines. Bold numbers indicate the best performance among unsupervised methods, while underlined numbers denote the second-best unsupervised results. Supervised results are reported as reference points. The last three columns report the average absolute improvement of our method over each baseline on Hit@1, Hit@5, and Hit@10 across the three benchmarks, computed as \textit{Ours} $-$ \textit{Baseline}.}
\label{tab:main_results}
\renewcommand{\arraystretch}{1.2}
\resizebox{\textwidth}{!}{%
\begin{tabular}{|c|c|c|c|c|c|c|c|c|c|c|c|c|c|c|}
\hline
\multirow{2}{*}{\textbf{Category}} 
& \multirow{2}{*}{\textbf{M}} 
& \multirow{2}{*}{\textbf{Methods}} 
& \multicolumn{3}{c|}{\textbf{WikiMEL}~\cite{wang2022multimodal}} 
& \multicolumn{3}{c|}{\textbf{RichpediaMEL}~\cite{wang2022multimodal}} 
& \multicolumn{3}{c|}{\textbf{WikiDiverse}~\cite{wang2022wikidiverse}} 
& \multicolumn{3}{>{\columncolor{blue!8}}c|}{\textbf{Improvement}} \\
\cline{4-15}
& & 
& \multicolumn{1}{c|}{HIT@1} & \multicolumn{1}{c|}{HIT@5} & HIT@10 
& \multicolumn{1}{c|}{HIT@1} & \multicolumn{1}{c|}{HIT@5} & HIT@10
& \multicolumn{1}{c|}{HIT@1} & \multicolumn{1}{c|}{HIT@5} & HIT@10
& \multicolumn{1}{c|}{HIT@1} & \multicolumn{1}{c|}{HIT@5} & HIT@10 \\
\hline
 
\multirow{5}{*}{Supervised} 
 & T & ARNN~\cite{eshel2017named} & 32.0 & 45.8 & 56.6 & 31.2 & 39.3 & 45.9 & 22.4 & 50.5 & 68.4 & 47.85 & 42.78 & 33.46\\
 & T+V & MEL-HI~\cite{zhang2021attention} & 38.6 & 55.1 & 65.2 & 34.9 & 43.1 & 50.6 & 45.7 & 76.5 & 88.6 & 36.65
& 29.75 & 22.29 \\
 & T+V & GHMFC~\cite{wang2022multimodal} & 43.6 & 64.0 & 74.4 & 38.7 & 50.9 & 58.5  & 46.0 & 77.5 & 88.9  & 33.61 & 23.85 & 16.49 \\
 & T+V & DWE~\cite{song2024dual} & 44.7 & 65.9 & 80.8 & 67.6 & 97.1 & 98.6 & 47.5 & 81.3 & 92.0  & 23.11 & 6.55 & -0.04 \\
 & T+V & \(M^{3}EL\)(10\%)~\cite{hu2025multi} & 76.72 & 90.19 & 93.7 & 64.51 & 87.28 & 91.6 & 66.36 & 84.31 &  89.71 &  7.18 & 0.72 & -1.24 \\
 \hline

\multirow{4}{*}{Unsupervised} 
 & T & BERT~\cite{devlin2019bert} & 31.7 & 48.8 & 57.8 & 31.6 & 42.0 & 47.6 & 22.2 & 53.8 & 59.8 & 47.88 & 39.78 & 35.36 \\
 & T+V & CLIP~\cite{radford2021learning} & 40.7 & 56.0 & 64.6 & 38.1 & 54.5 & 62.4 & 34.4 & 59.7 & 62.2 & 38.65 & 31.25 & 27.36 \\
 & T+V & BM~\cite{gan2021multimodal} & 33.2 & 50.7 & 57.5 & 45.1 & 62.3 & 69.9 & 28.8 & 48.8 & 58.1 & 40.68 & 34.05 & 28.59\\
 & T+V & OpenMEL~\cite{zhu2025openmel} & \ul{69.8} & \ul{81.0} & \ul{83.4} & \underline{65.6} & \ul{77.4} & \ul{80.4} & \ul{67.1} & \ul{82.2} & \ul{85.2} & 8.88 & 7.78 & 7.43\\

 \hline
 Unsupervised & T+V & \textbf{\ModelName~(Ours)} & \textbf{84.99} & \textbf{91.76} & \textbf{92.86} & \textbf{74.4} & \textbf{87.48} & \textbf{89.56} & \textbf{69.75} & \textbf{84.7} & \textbf{88.86} & - & - & - \\ \hline
\end{tabular}%
}
\end{table*}
\noindent \textbf{Datasets.}
We evaluate \ModelName{} on three widely used multimodal entity linking benchmarks: WikiMEL, RichpediaMEL~\cite{wang2022multimodal}, and WikiDiverse~\cite{wang2022wikidiverse}. These datasets differ in domain coverage, entity types, and modality complexity, as summarized in Table~\ref{tab: statistics_of_datasets}. WikiMEL and RichpediaMEL mainly focus on person entities collected from Wikipedia-based sources, while WikiDiverse covers a broader range of entity types (e.g., persons, organizations and locations), posing greater challenges for generalization.

\noindent \textbf{Baselines.}
We compare \ModelName{} with representative unsupervised and supervised MEL methods.

\begin{itemize}
    \item \textit{Unsupervised methods} include BM25~\cite{gan2021multimodal}, BERT (Zero-shot)~\cite{devlin2019bert}, and CLIP (Zero-shot)~\cite{radford2021learning}, which perform entity linking via lexical or representation-based similarity ranking without task-specific supervision, as well as the recent state-of-the-art framework OpenMEL~\cite{zhu2025openmel}.
    \item \textit{Supervised methods} include ARNN~\cite{eshel2017named}, MEL-HI~\cite{zhang2021attention}, GHMFC\allowbreak~\cite{wang2022multimodal}, DWE~\cite{song2024dual}, and M$^3$EL~\cite{hu2025multi}, which model MEL as a discriminative ranking or classification problem using annotated mention--entity pairs during training.
\end{itemize}

\noindent \textbf{Metrics.}
We evaluate performance using Hit@$k$ (Recall@$k$), which measures whether the ground-truth entity appears within the top-$k$ ranked candidates. We report Hit@$\{1,5,10\}$, where Hit@1 reflects top-ranked accuracy and Hit@5/10 evaluate recall under a limited candidate pool.

\noindent \textbf{Implementation details.}
Experiments are conducted on a workstation with an Intel Xeon Silver 4314 CPU, an NVIDIA RTX A5000 GPU, and 512 GB RAM. We use ChatGPT-4o as the default backbone for both offline and online phases.
For offline synthesis, we set the retrieval budget $K_{ch}=250$, expansion budget $K_{llm}=30$, subgraph size $K_{ppr}=20$, and distillation weight $\lambda_{2}=0.75$ to balance coverage and efficiency. Online, the LLM induces decision trees with a maximum depth of 5 to enable controlled and stable evidence reasoning.

\subsection{Experimental Results}

\noindent \textbf{Exp-1: Overall Performance.}
Table~\ref{tab:main_results} reports the overall results on WikiMEL, RichpediaMEL, and WikiDiverse. \ModelName{} consistently achieves the best performance among all unsupervised methods across all datasets and evaluation metrics.
This strong and consistent advantage shows that our framework generalizes well across benchmarks with different data characteristics and ambiguity levels.

We first compare \ModelName{} with representative unsupervised baselines, including text-only semantic matching (BERT), vision--language alignment (CLIP), multimodal bi-encoder matching (BM), and the recent strong framework OpenMEL. While CLIP and BM improve over BERT by incorporating visual information, their performance remains limited by instance-level similarity matching. 
OpenMEL further enhances unsupervised MEL by leveraging LLM-based contextual expansion and global coherence modeling. Nevertheless, its graph construction and decision process are still driven by top-$k$ similarity scores at the instance level. 
Notably, we surpass the previous SOTA, OpenMEL, with a relative HIT@1 improvement of 21.7\% on WikiMEL (84.99 vs. 69.8) and 13.4\% on RichpediaMEL (74.4 vs. 65.6). 
More broadly, as shown in the last three columns of Table~\ref{tab:main_results}, \ModelName{} improves over OpenMEL by an average of 8.88, 7.78, and 7.43 points on Hit@1, Hit@5, and Hit@10 across the three benchmarks, respectively.
This observation demonstrates the effectiveness of group-level representation learning and multi-perspective evidence reasoning beyond instance-centric similarity optimization.

Since our primary focus is unsupervised MEL, supervised results are reported as reference points. Despite using no labeled data, \ModelName{} achieves competitive performance compared with several supervised baselines (ARNN, MEL-HI, and GHMFC), and substantially outperforms DWE across all datasets. Under the low-resource setting, \ModelName{} also consistently surpasses $M^3EL$ trained with only 10\% labeled data. These results are particularly encouraging, as they suggest that robust multi-perspective evidence synthesis and reasoning can effectively reduce the dependence on manual annotations. Overall, the results validate that the proposed framework provides an effective and scalable solution for unsupervised MEL.

\begin{figure}[!t]
    \centering
    \includegraphics[width=1.0\linewidth]{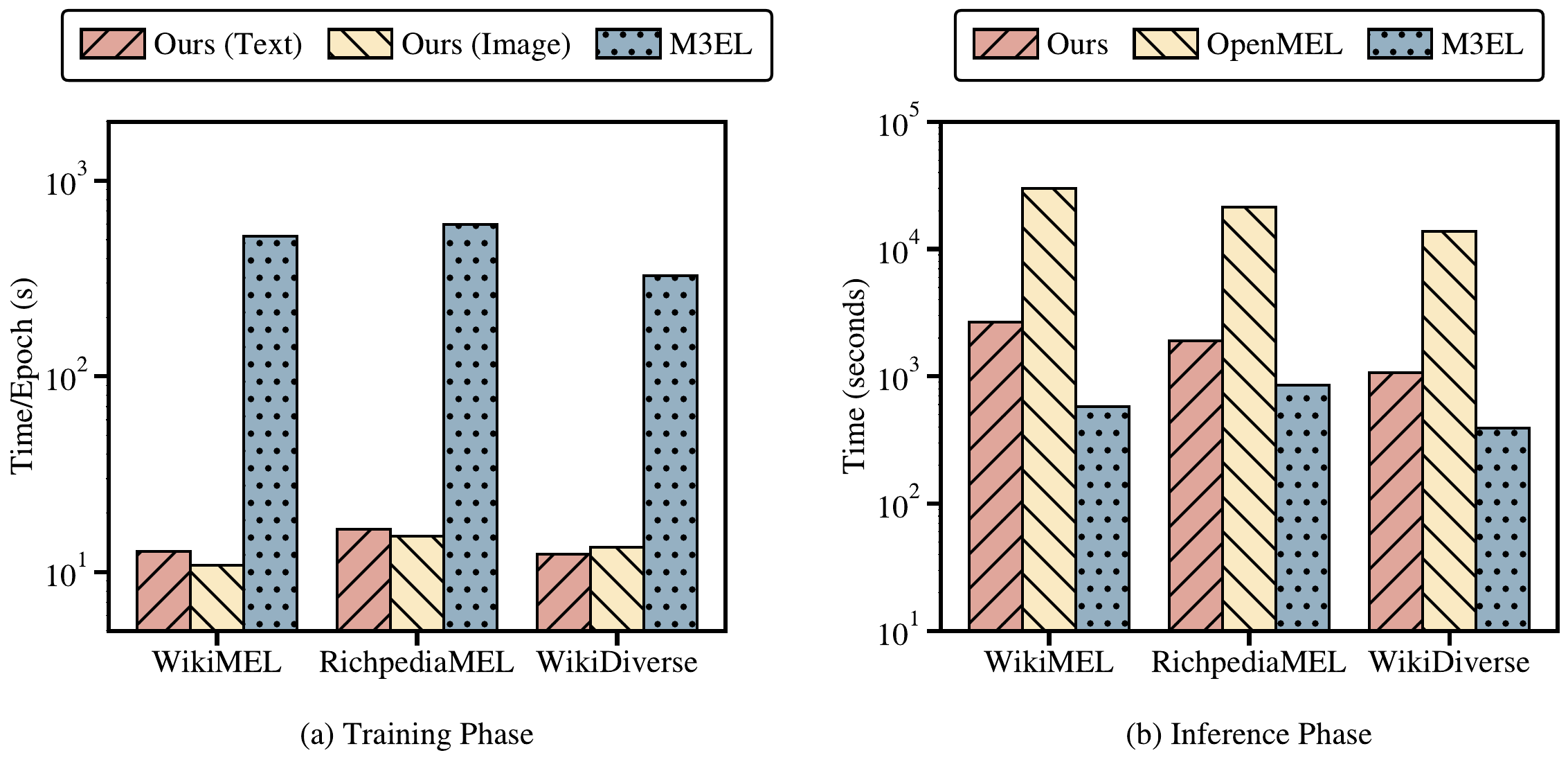}
    \caption{Efficiency comparison on MEL benchmarks. (a) Average training cost per epoch (log scale). (b) End-to-end inference time (log scale).}
    \label{fig: efficiency}
\end{figure}

\vspace{2pt}
\noindent \textbf{Exp-2: Efficiency Evaluation.}
Figure~\ref{fig: efficiency} compares the training and inference efficiency of \ModelName{} on three MEL benchmarks.
For training, we report the average time per epoch in log scale. \ModelName{} incurs consistently low per-epoch training cost across all datasets, and we further decompose the cost into the text and image branches to reflect its computational structure. Both branches remain lightweight, with per-epoch cost staying around the order of $10^{1}$ seconds, whereas $M^3EL$ is higher by more than one order of magnitude on all three benchmarks. This result indicates that the proposed offline evidence synthesis framework is computationally efficient in practice, despite involving contextualized graph construction and teacher--student alignment.
For inference, we report the end-to-end runtime in log scale. While $M^3EL$ achieves the fastest inference as a fully end-to-end model, its strong efficiency comes with the benefit of supervised training. By contrast, OpenMEL exhibits substantially higher inference cost across all datasets, which is consistent with its more involved structural construction and optimization procedure. \ModelName{} consistently outperforms OpenMEL across all benchmarks, reducing end-to-end inference time by approximately one order of magnitude (around 11--13$\times$) while maintaining strong accuracy. This shows that our online reasoning framework avoids the heavy inference overhead of explicit structural optimization, while still retaining the advantages of LLM-guided evidence reasoning.

\vspace{2pt}
\noindent \textbf{Exp-3: Parameter Sensitivity Analysis.}
\begin{figure}
    \centering
    \includegraphics[width=1\linewidth]{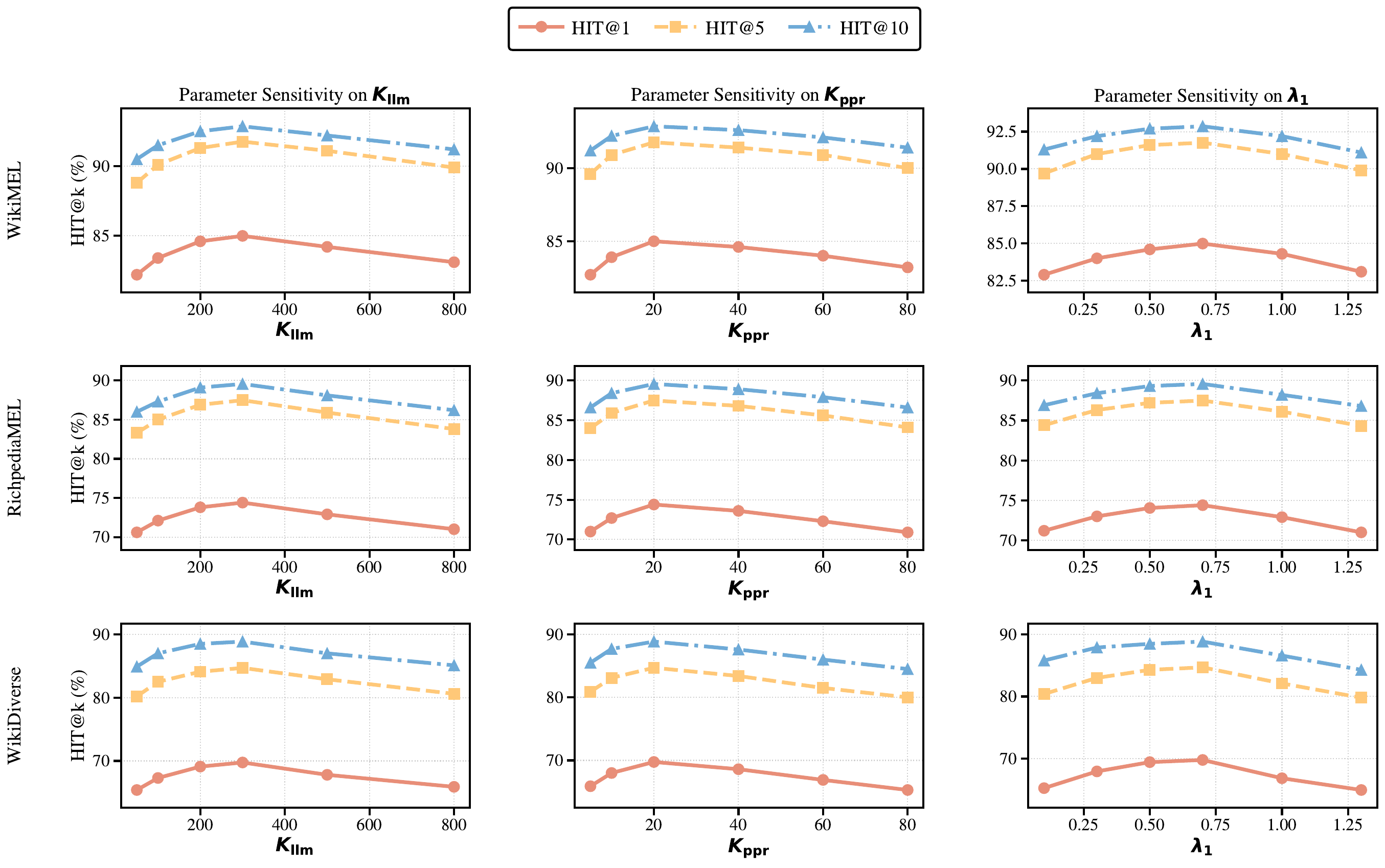}
    \caption{Parameter sensitivity analysis of\,\ModelName{} on WikiMEL, RichpediaMEL, and WikiDiverse.
    Each row corresponds to a dataset, and each column varies one hyper-parameter while keeping the others fixed:
    $K_{llm}$ (left), $K_{ppr}$ (middle), and $\lambda_1$ (right).
    }
    \label{fig:sensitivity_analysis}
\end{figure}
We evaluate the sensitivity of \ModelName{} to three key offline hyper-parameters: the LLM-based expansion budget $K_{llm}$, the PPR subgraph size $K_{ppr}$, and the cross-modal distillation weight $\lambda_1$.
As shown in Figure~\ref{fig:sensitivity_analysis}, performance exhibits consistent unimodal trends, improving with increasing values and degrading under extreme settings.
Among them, $\lambda_1$ has the largest impact, underscoring the importance of cross-modal balancing.
WikiMEL is relatively robust to parameter variations, RichpediaMEL shows moderate sensitivity, while WikiDiverse is the most sensitive.
Overall, \ModelName{} remains robust across a broad parameter range, with more challenging datasets imposing stricter tuning requirements.

\vspace{2pt}
\noindent \textbf{Exp-4: Effect of LLM.}
\begin{table}[t]
\centering
\caption{Impact of LLM variants on semantic augmentation performance (Hit@1).}
\label{tab:llm_ablation}
\resizebox{\linewidth}{!}{%
\begin{tabular}{lccc}
\toprule
\textbf{LLM Variant} & \textbf{WikiMEL} & \textbf{RichpediaMEL} & \textbf{WikiDiverse} \\ 
\midrule
\textbf{\ModelName{} + ChatGPT-4o~\cite{achiam2023gpt}} & \textbf{84.99} & \textbf{74.40} & \textbf{69.75} \\ 
\midrule
\multicolumn{3}{l}{\textit{\textbf{A. Alternative LLMs for Semantic Expansion}}} \\
\quad DeepSeek-Chat (Ours)~\cite{liu2024deepseek}       & \ul{82.73} & \ul{71.29} & 65.10 \\
\quad DeepSeek-Chat (OpenMEL~\cite{zhu2025openmel})      & 67.20 & 69.30 & 63.80 \\
\quad LLaMA3-8B (Ours)~\cite{touvron2023llama}          & 81.76 & 70.31 & \ul{67.30} \\
\quad LLaMA3-8B (OpenMEL~\cite{zhu2025openmel})          & 69.80 & 65.60 & 67.10 \\
\bottomrule
\end{tabular}%
}
\end{table}
This experiment investigates the influence of different LLMs employed for semantic augmentation during the offline stage of \ModelName{}. 
As illustrated in Table~\ref{tab:llm_ablation}, LLMs with more extensive world knowledge and superior contextual discernment consistently enhance the quality of generated descriptions, thereby improving retrieval and linking performance. 
ChatGPT-4o attains the optimal results, due to its ability to produce highly precise and informative semantic signals even with sparse mention context. 
The performance gap is most significant on WikiDiverse, as its diverse entity types and complex contexts require much more precise descriptions to distinguish between candidates.
This suggests that high-fidelity semantic expansion is vital for resolving ambiguities in noisy, cross-domain multimodal environments. 
Consequently, we adopt ChatGPT-4o as the core backbone for the offline semantic enhancement module.

\vspace{2pt}

\begin{table}[t!]
\centering
\caption{Performance under different modality conditions (Hit@1). ``All modality'' denotes samples in which both nodes in the mention--entity pair have complete modalities, while ``Missing modality'' denotes samples for which at least one image modality is missing.}
\label{tab:modality}
\resizebox{\linewidth}{!}{%
\begin{tabular}{|l|l|c|c|c|}
\hline
\textbf{Setting} & \textbf{Method} & \textbf{WikiMEL} & \textbf{Richpedia} & \textbf{WikiDiverse} \\
\hline
\multirow{2}{*}{Full dataset}
& \cellcolor{gray!15}Ours & \cellcolor{gray!15}84.99 & \cellcolor{gray!15}74.40 & \cellcolor{gray!15}69.75 \\
& OpenMEL~\cite{zhu2025openmel} & 69.80 & 65.60 & 67.10 \\
\hline
\multirow{2}{*}{All modality}
& \cellcolor{gray!15}Ours & \cellcolor{gray!15}86.60 & \cellcolor{gray!15}76.20 & \cellcolor{gray!15}72.80 \\
& OpenMEL~\cite{zhu2025openmel} & 71.40 & 67.34 & 68.17 \\
\hline
\multirow{2}{*}{Missing modality}
& \cellcolor{gray!15}Ours & \cellcolor{gray!15}83.03 & \cellcolor{gray!15}71.97 & \cellcolor{gray!15}69.70 \\
& OpenMEL~\cite{zhu2025openmel} & 68.70 & 65.43 & 66.09 \\
\hline
\end{tabular}%
}
\end{table}
\noindent \textbf{Exp-5: Modality Robustness.}
Table~\ref{tab:modality} reports Hit@1 results under different modality conditions on WikiMEL, Richpedia, and WikiDiverse. \ModelName{} consistently outperforms OpenMEL on the full dataset, the all-modality subset, and the missing-modality subset across all three benchmarks. As expected, both methods achieve better performance when complete modalities are available, since richer multimodal evidence provides stronger support for disambiguation. Under the missing-modality setting, the performance of both methods decreases, but \ModelName{} still maintains a clear advantage over OpenMEL on all datasets. These results suggest that \ModelName{} remains robust even when modality information is incomplete, benefiting from its joint use of complementary evidence beyond direct visual matching.

\begin{figure}
    \centering
    \includegraphics[width=0.95\linewidth]{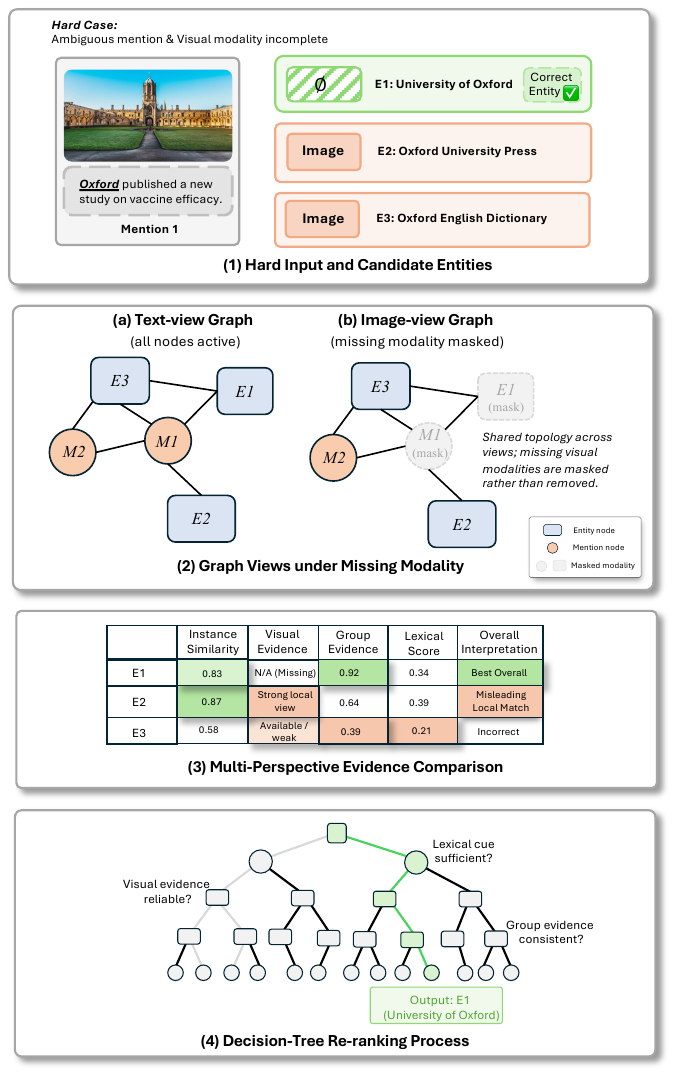}
    \caption{Case study of \ModelName{} on a hard example with missing visual modality.}
    \label{fig:qualitative_case_study}
\end{figure}
\vspace{2pt}
\noindent \textbf{Exp-6: Case Study.}
Figure~\ref{fig:qualitative_case_study} shows a representative hard case where the mention contains the ambiguous surface form Oxford, while the ground-truth entity, University of Oxford (E1), has a missing visual modality. In this example, relying only on local matching is insufficient: the distractor Oxford University Press (E2) receives the strongest instance-centric textual similarity and lexical overlap.
Nevertheless, \ModelName{} still ranks E1 first. As shown in Figure~\ref{fig:qualitative_case_study}(2), the model preserves shared graph topology across the text and image views and masks missing visual modalities instead of removing the corresponding nodes, allowing E1 to remain connected to semantically related neighbors through the text view.
Compared with E2, E1 is supported by stronger group-level evidence and receives the best overall interpretation, while E2 is identified as a misleading local match due to its strong but locally biased visual cue, as shown in Figure~\ref{fig:qualitative_case_study}(3). This indicates that E1 is supported by more globally consistent multi-perspective evidence, whereas E2 depends more on a few locally strong cues.
During LLM-driven reasoning, the unreliable visual signal is down-weighted, and the model finally chose E1 because its instance-centric, group-level, lexical, and statistical evidence are more coherent. This case illustrates the advantage of \ModelName{} for resolving ambiguous mentions under incomplete multimodal conditions.

\vspace{-5pt}
\subsection{Ablation study}
\begin{table}[t]
\centering
\caption{Ablation study of offline evidence synthesis and online evidence reasoning in \ModelName{} (Hit@1). All variants are evaluated under the full inference pipeline unless otherwise specified.}
\label{tab:unified_ablation}
\resizebox{0.46\textwidth}{!}{%
\begin{tabular}{lccc}
\toprule
\textbf{Variant} & \textbf{WikiMEL} & \textbf{Richpedia} & \textbf{WikiDiverse} \\
\midrule
\textbf{\ModelName{} (Full Model)} & \textbf{84.99} & \textbf{74.40} & \textbf{69.75} \\
\midrule
\multicolumn{4}{l}{\textit{\textbf{A. Offline Evidence Synthesis}}} \\
\quad w/o LLM-enhanced Edges & 81.19 & 71.55 & 66.71 \\
\quad w/o Intra-type Edges (M--M, E--E) & 82.36 & 72.91 & 67.30 \\
\quad w/o Teacher--Student Alignment & 80.08 & 70.71 & 65.24 \\
\quad Text-only Group Evidence (w/o Student) & 83.29 & 73.88 & 68.23 \\
\quad Image-only Group Evidence (w/o Teacher) & 76.98 & 66.25 & 62.30 \\
\midrule
\multicolumn{4}{l}{\textit{\textbf{B. Online Evidence Reasoning}}} \\
\quad w/o CLIP Features ($\mathbf{f}_{\text{inst}}$) & 53.87 & 49.30 & 46.12 \\
\quad w/o GNN Features ($\mathbf{f}_{\text{group}}$) & 74.50 & 65.32 & 63.44 \\
\quad w/o String Features ($\mathbf{f}_{\text{lex}}$) & 78.96 & 72.83 & 62.90 \\
\quad w/o Distributional Statistics ($\mathbf{f}_{\text{stat}}$: $\mu, s_{\max}, \Delta$) & 83.29 & 72.90 & 68.71 \\
\quad w/o Modality Availability Evidence ($\mathbf{f}_{\text{stat}}$: $\mathbb{I}$) & 82.78 & 73.12 & 66.80 \\
\quad w/o Prior Scoring ($s_{\text{prior}}$) & 76.66 & 70.34 & 63.90 \\
\quad w/o LLM-driven Evidence Reasoning ($\pi_{\text{LLM}}$) & 81.43 & 72.34 & 66.37 \\
\bottomrule
\end{tabular}%
}
\end{table}

\noindent \textbf{Exp-7: The Effect of Offline Evidence Synthesis.}
We analyze the contributions of key design choices in the offline stage of \ModelName{}, including contextualized graph construction and representation learning objectives. As shown in Table~\ref{tab:unified_ablation}, removing LLM-enhanced edges or intra-type edges (M--M, E--E) consistently degrades performance across all datasets, indicating that both LLM-based semantic expansion and higher-order neighborhood connectivity are crucial for building a high-fidelity
contextualized graph. On the learning side, disabling teacher--student alignment leads to a notable performance drop, highlighting the importance of cross-modal knowledge distillation for stabilizing visual representations. While a text-only GNN achieves relatively strong results, it still underperforms the full model, and an image-only GNN performs the worst, reflecting the noise and incompleteness of visual signals. Overall, these results demonstrate that effective offline MEL requires both robust contextualized graph construction and joint optimization of structural and cross-modal learning objectives.

\noindent \textbf{Exp-8: The Effect of Online Evidence Reasoning.}
We evaluate the online inference design in Algorithm~\ref{alg:reranking} by ablating evidence sources and reasoning modules.
As shown in Table~\ref{tab:unified_ablation}, removing CLIP feature, GNN features, or string features causes large performance drops, indicating that both graph-based contextual signals and symbolic anchors are important for constructing reliable multi-view evidence. Consistency features further improve performance by capturing cross-view agreement, which helps filter candidates that are only supported by a single modality. On the inference side, removing the base scorer or the LLM-guided ranking policy also leads to clear degradation, demonstrating that lightweight coarse scoring and LLM-based evidence reasoning play complementary roles in the fine-grained re-ranking stage.
Overall, these results validate the effectiveness of our coarse-to-fine inference framework that combines multi-view candidate generation with contextual evidence reasoning.

    \section{Related Work}
    \label{sec.related}
    
\subsection{Entity Linking}

Entity linking (EL) aims to map entity mentions in text to corresponding entries in a KB~\cite{shen2014entity, rao2012entity}, where a key challenge is resolving ambiguous mentions whose interpretation depends on context~\cite{hasibi2016exploiting}. Early EL methods relied on lexical and statistical features, such as tf--idf~\cite{aizawa2003information} and word embeddings~\cite{mikolov2013efficient}, and performed linking by ranking candidate entities according to similarity scores~\cite{ganea2017deep, gillick2019learning}. Subsequent approaches replaced handcrafted features with neural encoders, while more recent work leverages large pre-trained language models, including dense retrieval methods such as BLINK~\cite{wu2020scalable} and generative formulations such as GENRE~\cite{de2020autoregressive}. Despite architectural differences, these methods largely follow a common paradigm: they first learn contextualized representations for mentions and entities, and then make linking decisions by directly scoring individual mention--entity pairs, remaining instance-centric and relying on a single aggregated matching signal.

\subsection{Multimodal Entity Linking}

As multimedia content becomes increasingly common on the web, entity linking has been extended to incorporate both textual and visual information for disambiguation. Existing approaches are typically categorized into supervised and unsupervised methods. 
Supervised methods model the task as a classification problem on annotated mention–entity pairs, learning confidence scores for entity assignment. While early efforts introduced social media benchmarks like SnapCaptionsKB~\cite{moon2018multimodal} and quasi-automatically constructed datasets from Twitter posts~\cite{adjali2020building}, recent advances focus on aligning multimodal representations via contrastive learning (e.g., M$^3$EL~\cite{hu2025multi}) or LLM-driven semantic enrichment, such as UniMEL~\cite{liu2024unimel}, KGMEL~\cite{kim2025kgmel}, I2CR~\cite{liu2025i2cr}, and the multi-agent DeepMEL~\cite{wang2026deepmel}. 
Conversely, unsupervised methods alleviate the dependence on manual annotation by leveraging pre-trained models (e.g., CLIP~\cite{radford2021learning} or BERT~\cite{devlin2019bert}) to generate representations and performing linking via similarity-based ranking. Recent frameworks like OpenMEL~\cite{zhu2025openmel} incorporate structural context via tree formulations but remain fundamentally driven by similarity-based, instance-level matching.

\subsection{Graph Neural Network}
Graph neural networks (GNNs) have become a standard paradigm for learning contextualized node representations by propagating and aggregating information over local graph neighborhoods~\cite{wang2024neural}. Compared with independent node encoders, GNNs explicitly model relational structure and are therefore particularly suitable for scenarios where neighborhood context provides important complementary signals. Representative architectures include Graph Convolutional Networks (GCN)~\cite{kipf2016semi}, GraphSAGE~\cite{hamilton2017inductive}, and Graph Attention Networks (GAT)~\cite{velivckovic2017graph}. GCN performs neighborhood aggregation through normalized graph convolution and is widely adopted for its simplicity and efficiency. GraphSAGE extends this idea with inductive neighborhood sampling and aggregation, making it more scalable to large graphs. GAT further introduces attention-based message passing, enabling the model to assign different importance weights to different neighbors.
Beyond these representative models, GNNs have been widely used in graph-based reasoning, where relational dependencies are important. In MEL, GNNs are particularly suitable because they enable contextual representation learning over mention–entity graphs. This makes them effective for handling sparse, noisy, or incomplete multimodal evidence. 
Among existing variants, GCN remains a widely adopted backbone due to its simplicity and computational efficiency.

    \section{Conclusion}
    \label{sec.conclusion}
In this paper, we propose \ModelName{}, a multi-perspective evidence synthesis and reasoning framework with LLM for unsupervised multimodal entity linking.
\ModelName{} explicitly constructs instance-centric, group-level, lexical, and statistical evidence through an offline synthesis stage.
At inference time, it performs reasoning over heterogeneous evidence via an LLM-driven, tree-structured re-ranking strategy.
In particular, group-level evidence is synthesized through LLM-enhanced contextualized graphs and asymmetric teacher--student graph alignment.
This design enables robust aggregation of neighborhood information under sparse and incomplete multimodal data.
Extensive experiments on widely used MEL benchmarks show that \ModelName{} consistently outperforms state-of-the-art unsupervised methods, demonstrating the effectiveness of explicit multi-perspective evidence modeling and reasoning for MEL.


    \newpage
    \normalem
    \bibliographystyle{ACM-Reference-Format}
    \bibliography{main}  

    \balance
\end{document}